\definecolor{light-gray}{gray}{0.95}
\newtheorem{theorem}{Theorem}
\newtheorem*{utheorem}{Theorem}
\definecolor{flodarkpurple}{rgb}{0.694,0.016,0.055} 
\title{Real-Time Out-of-Distribution~Failure~Prevention via Multi-Modal Reasoning}
\newcommand{\authorhref}[3][black]{\href{#2}{\textcolor{#1}{#3}}}
\author{
    \bfseries
    \authorhref{https://milanganai.github.io/}{Milan Ganai}\textsuperscript{1},\,\,
    \authorhref{https://rohansinha.nl/}{Rohan Sinha}\textsuperscript{1,†},\,\,
    \authorhref{http://agiachris.github.io/}{Christopher Agia}\textsuperscript{1,†},\,\,
    \authorhref{https://danielpmorton.github.io/}{Daniel Morton}\textsuperscript{1},\\ \bfseries
    \authorhref{https://ieeexplore.ieee.org/author/37089761562}{Luigi Di Lillo}\textsuperscript{2},\,\,
    \authorhref{https://profiles.stanford.edu/marco-pavone}{Marco Pavone}\textsuperscript{1,3}\\\vspace{-6pt}\\
    \textsuperscript{1}\href{https://www.stanford.edu/}{\textcolor{flodarkpurple}{Stanford University}}, \textsuperscript{2}\href{https://www.swissre.com/}{\textcolor{flodarkpurple}{Swiss Re}}, \textsuperscript{3}\href{https://www.nvidia.com/en-us/research/}{\textcolor{flodarkpurple}{NVIDIA Research}}
    \vspace{-8pt}
}
\begin{document}
\maketitle
\begingroup
\renewcommand\thefootnote{\textsuperscript{†}}
\footnotetext{Equal contribution.}
\renewcommand\thefootnote{\textsuperscript{}}
\footnotetext{Correspondence to Milan Ganai: \texttt{mganai@stanford.edu}}
\endgroup


\vspace{-22pt}
\begin{figure}[h]
    \centering
    \includegraphics[width=\textwidth]{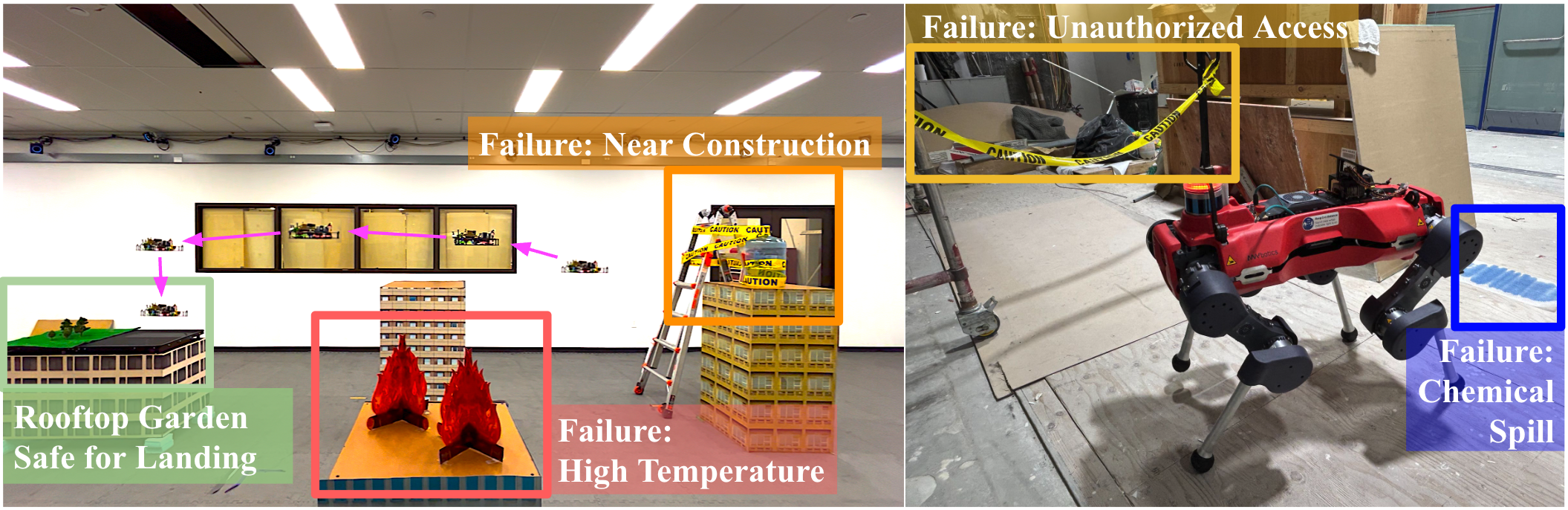}
    \caption{\footnotesize We present \texttt{FORTRESS}, a multi-modal reasoning framework that prevents catastrophic robot failures in safety-critical, out-of-distribution scenarios. \texttt{FORTRESS} incorporates the generalist knowledge of foundation models within a control-theoretic optimization framework to synthesize safe and dynamically feasible fallback plans, enabling robust responses to unforeseen hazards during open-world deployment. For example: (left) executing an emergency rooftop landing, or (right) avoiding semantically unsafe regions in construction zones.
    }
    \label{fig:herofig}
\vspace{-12pt}
\end{figure}

\begin{abstract}
    While foundation models offer promise toward improving robot safety in out-of-distribution (OOD) scenarios, how to effectively harness their generalist knowledge for real-time, dynamically feasible response remains a crucial problem. We present \texttt{FORTRESS}, a joint reasoning and planning framework that generates semantically safe fallback strategies to prevent safety-critical, OOD failures. At a low frequency under nominal operation, \texttt{FORTRESS} uses multi-modal foundation models to anticipate possible failure modes and identify safe fallback sets. When a runtime monitor triggers a fallback response, \texttt{FORTRESS} rapidly synthesizes plans to fallback goals while inferring and avoiding semantically unsafe regions in real time. By bridging open-world, multi-modal reasoning with dynamics-aware planning, we eliminate the need for hard-coded fallbacks and human safety interventions. \texttt{FORTRESS} outperforms on-the-fly prompting of slow reasoning models in safety classification accuracy on synthetic benchmarks and real-world ANYmal robot data, and further improves system safety and planning success in simulation and on quadrotor hardware for urban navigation. Website can be found at \textcolor{flodarkpurple}{https://milanganai.github.io/fortress}.
\end{abstract}

\keywords{Multi-modal Reasoning, Semantic Safety, OOD Reliability} 

\vspace{-0.2in}
\section{Introduction}
\vspace{-0.1in}
    
Across various industries, autonomous robots~\citep{Council2025Waymo, BMWGroup2024HumanoidRobots, SaudiAramcoAUV2024, GreeceDronesFire2024, CAAInfrastructureDrones2024, EVMagazineAutonomousAgriculture2024, SixthToneDronesDelivery2024} are expanding their deployment regions from structured, localized settings to unstructured, open-world environments~\citep{drummond2006open}. 
In the process, they increasingly encounter out-of-distribution (OOD) failure scenarios---situations that differ significantly from the system’s design assumptions and training data, leading to degraded performance, unsafe behavior, or even complete system malfunction~\citep{lu2024recent, sinha2022system}. To ensure the reliability and safety of autonomous robots in expanding operations, it is crucial to: adaptively identify goals for fallback strategies, accurately discern unsafe regions where entering could lead to OOD failures, and quickly generate executable fallback plans that are semantically safe (i.e. contextually appropriate plans that circumvent conceptually hazardous regions).

\begin{figure}[t!]
    \centering
    \includegraphics[width=\textwidth]{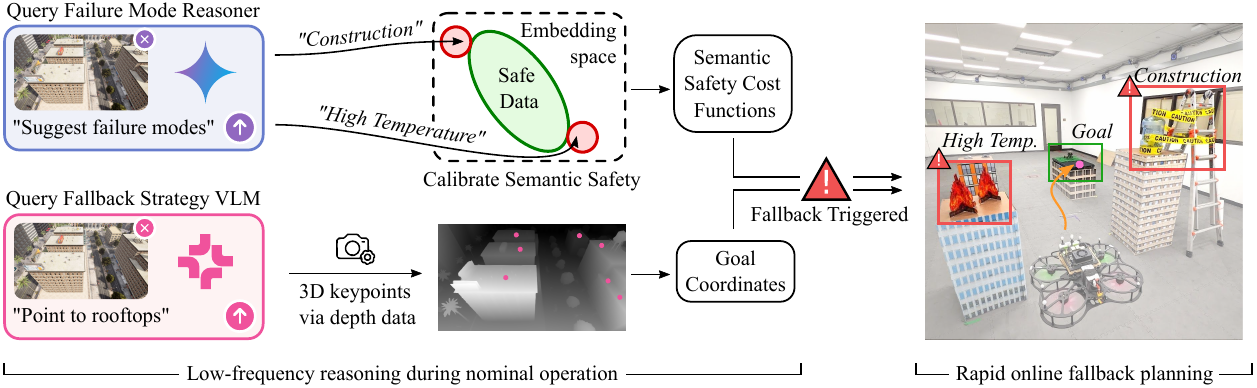}
    \caption{\footnotesize \textbf{Overview of \texttt{FORTRESS}:} Our proposed framework prevents OOD failures by reasoning about concrete goal locations based on semantic descriptions of fallback strategies, anticipating failure modes, and constructing cost functions that identify semantically unsafe regions at a low frequency during the nominal trajectory. When a safety response is triggered, the algorithm rapidly produces semantically safe fallback plans.}
    \label{fig:overview}
\vspace{-0.2in}
\end{figure}

Pre-trained foundation models, like Large Language Models (LLMs) and Vision-Language Models (VLMs)~\citep{bommasani2021opportunities}, has been leveraged to understand OOD scenarios~\citep{elhafsi2023semantic, sinha2022system} and reason in a zero-shot manner about high-level semantically appropriate responses~\citep{sinha2024real}. However, these high-level descriptions of safety interventions can be limiting: while a model might suggest a drone to ``land in a parking lot" to avoid a building on fire, this could lead to semantically unsafe plans that get too close to first responders involved at the scene. These models are inherently dynamics-agnostic, and attempts to integrate them~\citep{firoozi2023foundation} into end-to-end control pipelines can compromise robustness~\citep{pham2023combined, radford2021learning}. Moreover, the models' high inference latency~\citep{jabbour2024generative} renders them unsuitable for real-time querying for granular planning in safety-critical moments. Approaches resort to rigid, hard-coded fallback regions~\citep{sinha2024real} and human interventions for safety~\citep{santos2024updating}, which are impractical in open-world environments where potential failure scenarios are vast. The challenge is to use foundation model reasoning for the real-time generation of fallback plans that are semantically safe and dynamics-feasible.

We demonstrate how to leverage foundation models to identify fallback goals, anticipate failures, and infer semantically dangerous regions. By performing slow reasoning in advance, we expedite the generation of semantically safe fallback plans. Particularly, we introduce \texttt{FORTRESS}, a framework for OOD \underline{F}ailure Preventi\underline{o}n in \underline{R}eal Time by Genera\underline{t}ing and \underline{Re}asoning about Fallback \underline{S}trategie\underline{s} (overview in Fig.~\ref{fig:overview}). \texttt{FORTRESS} is a multitiered, slow-fast hierarchical approach: At a low frequency or offline, the algorithm (i) uses VLMs to translate abstract semantic strategies into suggestions of physical fallback goals and (ii) constructs cost functions to quickly identify semantically unsafe state spaces, entering which can result in OOD failures by calibrating embedding models with failure modes from foundation model reasoners. Once a runtime monitor triggers a response, \texttt{FORTRESS} (iii) enables existing motion planners to rapidly infuse semantic safety into fallback trajectories.

\vspace{-0.1in}
\section{Related Works}\label{sec:RW}
\vspace{-0.1in}

Safe control for autonomous systems has been studied through various formal methods~\citep{lindemann2024formal, garg2024learning, dawson2023safe}. Hamilton–Jacobi (HJ) reachability~\cite{hjreachabilityoverview, ganai2023iterative} can provide worst‑case reach‑avoid guarantees~\citep{chen2018hamilton, fisac2015reach, ganai24Hamilton}, and Control Barrier Functions (CBFs) certify control invariance~\citep{cbfsurvey}. They can be used as online safety filters to monitor the system behavior and invoke corrective controls when violations are imminent~\citep{hsu2023safety, chen2021backup, so2024train}. However, these tools require precise models of the environment and explicitly defined failure regions which are not always readily available in open-world settings.

Foundation models~\citep{bommasani2021opportunities,firoozi2023foundation}, including LLMs and VLMs, have enabled semantic planning in robotics~\citep{ahn2022can,singh2023progprompt}, such as in quadrotors~\citep{saviolo2024unifying,chen2023typefly}, manipulation~\citep{kim2024openvla,huang2023voxposer}, and self-driving cars~\citep{gao2024survey}. However, the models' high latency hinders real‑world deployment~\citep{jabbour2024generative}. One approach~\citep{santos2024updating} mitigates this by feeding language constraints to a fast VLM object detector~\citep{minderer2023scaling} and updating reachability–based online safety filters~\citep{bajcsy2019efficient, herbert2019reachability}, but the constraints are obtained from human intervention.

Real‑time recovery planning helps ensure safety in autonomous systems once a response is triggered. Current methods include game‑theoretic autonomous vehicle trajectory repair~\citep{wang2024interaction}, LLM‑based semantic trajectory fixes~\citep{lin2024drplanner}, satisfiable modulo theory and reachability analysis for traffic-rule-compliant repairs~\citep{lin2024traffic}, and perception‑failure detection with a trained safety monitor that triggers learned recovery plans~\citep{chakraborty2024system}. However, these approaches only offer localized trajectory repairs and lack dynamic real‑time generation of new fallback goals and paths, a crucial capability when the nominal path and goal suddenly become unsafe in open-world environments.

Furthermore, real-world systems face OOD scenarios that significantly deviate from training distributions. Some methods to improve system robustness include training for distributionally robust optimization~\citep{lin2022distributionally, hejna2024re} and model adaptation~\citep{chen2023adapt, hoffman2018cycada}. Recent research works investigate OOD detection~\citep{salehi2021unified, ruff2021unifying, liu2020energy, lakshminarayanan2017simple} paired with fallback controllers~\citep{gupta2024detecting, sinha2023closing, richter2017safe}. Foundation models employed in a zero‑/few‑shot manner~\citep{wortsman2022robust} can understand semantic anomalies~\citep{elhafsi2023semantic} and enable closed‑loop failure detection~\citep{elhafsi2023semantic, AgiaSinhaEtAl2024} along with fallback set selection~\citep{sinha2024real}. However, these rely on handcrafted policies or static fallback sets, which are inadequate in preventing OOD failures.

Despite significant advances in robotics planning and control, existing methods fall short in dynamic, open-world conditions. By relying on precise environment models, human interventions, and rigid fallback plans and/or goals, they leave a critical gap in unstructured settings when the nominal path's deployment becomes unsafe. We need systems that adaptively generate real‑time fallback strategies, ensuring recoveries are semantically safe in uncertain real‑world environments.

\vspace{-0.15in}
\section{Problem Formulation}\label{sec:PF}
\vspace{-0.15in}

We investigate robots with known discrete-time system dynamics given by $x_{t+1} = f(x_t, u_t)$, for state $x \in \mathcal{X} \subset \mathbb{R}^n$ and control $u \in \mathcal{U} \subset \mathbb{R}^m$. While we can control the robot's motion in its physical space, we aim to imbue understanding of semantic safety in the responses to OOD failures not captured in the robot's state -- like avoiding a roof on fire. To build semantic awareness, we assume access to safe semantic descriptions $\Omega_s:=\{\omega_i\}_{i=1}^N$---for example, state descriptions with ``buildings in city" for drones or ``trees on median strip" for autonomous vehicles. This is practical since systems have vast descriptions of logs from successful deployments and training data. We also assume we have a set of semantic descriptions of high-level fallback strategies $\Sigma := \{ \sigma_1, \sigma_2, \dots \}$, like ``empty flat roofs" for drone landing or ``vacant road shoulders" for vehicle parking. While engineers or language models can easily generate these general strategy descriptions from prior data, we aim to realize the physical execution of such strategies. The robot has function $\texttt{nearby}(x,l)$ that returns a state description of all concepts within distance $l$ of coordinate $x$, which onboard RGBD cameras and pre-trained object detectors (like OWL-ViT~\citep{minderer2022simple}, OWLv2~\citep{minderer2023scaling}, and YOLOv8~\citep{sohan2024review}) can rapidly provide. We also have cost functions $\theta_c$ to capture collision hazards, where $\theta_c(x)>0$ if and only if $x$ is within distance $l_c$ of a physical obstacle, which can be obtained from depth sensors. Following~\citep{sinha2024real, chen2021backup, so2024train}, we assume that the robot is equipped with a runtime monitor, which is a system that raises an alarm when it detects anomalies or potential hazards in the robot's environment. The monitor's alarm prompts us to reevaluate the robot's plan and identify an appropriate response.

We address the challenge of generating semantically safe fallback plan responses. In open-world environments, novel semantically unsafe concepts, with descriptions referred to as $\Omega_d$, can appear anywhere. For example, while ``person" and ``ladder" are individually safe, ``person on a ladder" poses a ``Worker Injury" hazard for the ANYmal robot, as illustrated in Fig.~\ref{fig:anymal}. These unsafe concepts may be related or even unrelated to what triggered the fallback. Robots generally have limited access to data on experiencing and responding to failure. So if a robot encounters state descriptions in $\Omega_d$, we consider these OOD failures relative to our safe training data, $\Omega_s$. 

\emph{Objective:} We aim to design an algorithm to identify, plan, and execute semantically safe fallback behaviors when anomalous conditions render the robot's original task unsafe. The robot must:

\textbf{M1}: Be prepared with relevant concrete fallback strategy goal locations aligned with semantic strategy idea set $\Sigma$, when the original nominal path is rendered infeasible or unsafe.

\textbf{M2}: Bootstrap its understanding of semantically unsafe states that would result in OOD failures, i.e., states $x$ where $\texttt{nearby}(x,l)\in \Omega_d$, using our plentiful safe state descriptions $\Omega_s$.

\textbf{M3}: Rapidly generate safe plans that implement a fallback strategy while satisfying the semantic safety constraints and physical constraints.

\vspace{-0.1in}
\section{Proposed Approach}\label{sec:PA}
\vspace{-0.1in}

We describe our framework \texttt{FORTRESS}, which generates and reasons about semantically safe fallback strategies to prevent OOD failures (Fig.~\ref{fig:overview}). Our algorithm has three key components: (i) constructing physical fallback goal locations from abstract semantic strategy ideas (Section~\ref{sec:fallback}), (ii) rapidly inferring semantically unsafe regions (Section~\ref{sec:FailReason}), and (iii) employing foundation model reasoning with worst-case analysis to rapidly generate semantically safe fallback plans (Section~\ref{sec:SafePlanGen}).

\vspace{-0.1in}
  \subsection{Generating potential fallback strategy sets \textbf{(M1)}}\label{sec:fallback}
\vspace{-0.1in}

\begin{wrapfigure}{l}{0.4\linewidth}
        \vspace{-4pt} 
        \includegraphics[width=\linewidth]{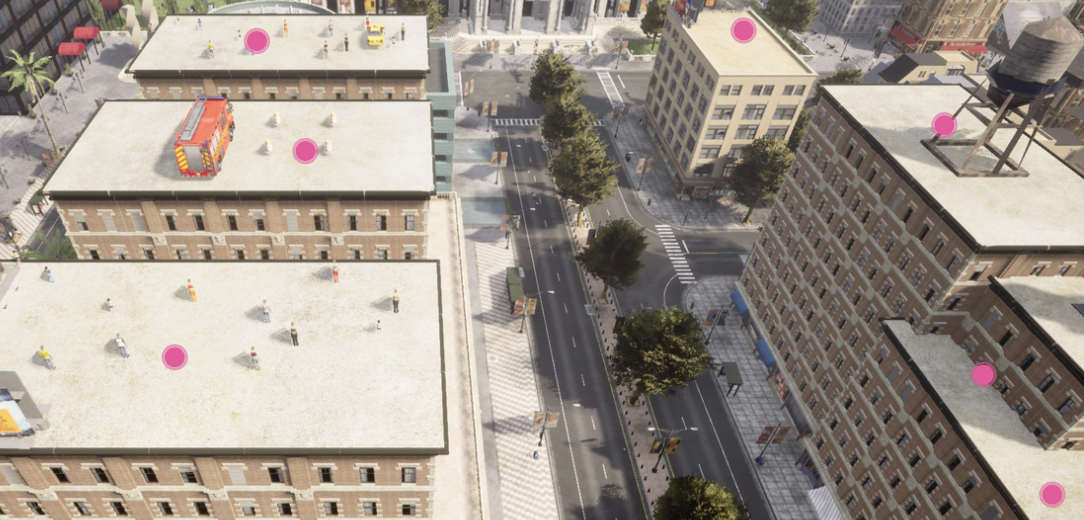}
        \caption{\footnotesize For a drone agent in the CARLA simulator, \texttt{FORTRESS} identifies goal locations for semantic fallback strategy description ``empty, horizontal building roofs" using VLMs like Molmo. VLMs produce safe goals, but sometimes they also generate unsafe ones (e.g. near people, cars, etc) and are not dynamics-aware (e.g. tight landing spots require complex maneuvering). }
        \label{fig:molmo}
        \vspace{-6pt} 
\end{wrapfigure}

A crucial aspect in executing fallback strategies is generating clear goals for contingency plans when the original plan's goal is unattainable. We leverage the abstract semantic fallback characteristics $\Sigma$ defined in Section~\ref{sec:PF}. These fallback goals serve various purposes, such as waiting for external dangers to subside, recalibrating perception systems, enabling controlled hardware/software maintenance, providing extra time to reassess the environment, creating more robust mitigation strategies, or receiving human intervention. To transform semantic descriptions into physical fallback goal locations, we utilize the general-purpose reasoning of VLMs. In particular, we employ the VLM Molmo trained on the PixMo dataset~\citep{deitke2024molmo} to analyze a robot’s scene image with a query based on a strategy description $\sigma_i\in \Sigma$. The VLM outputs pixel coordinates for potential fallback points (shown in Fig.~\ref{fig:molmo}), which are then converted into 3D global coordinates using depth information and camera intrinsics, to form a set of goal locations $\mathcal{G}_i$. VLM queries can incur several seconds of latency, depending on model size and token limits. Therefore we propose mitigating delays during critical moments by preemptively querying the VLM at a low frequency during normal operations (see Table~\ref{tab:Times} for times) to identify potential fallback locations, caching their corresponding 3D global coordinates, and loading these locations immediately when a fallback response is needed.

\vspace{-0.1in}
\subsection{Reasoning about semantic safety constraints \textbf{(M2)}}\label{sec:FailReason}
\vspace{-0.1in}

Another key contribution is an approach for rapid and efficient reasoning to identify semantically unsafe state spaces. While the robot has access to many descriptions of safe, nominal data $\Omega_s$, it faces the challenge of identifying semantically unsafe regions that are naturally OOD compared with $\Omega_s$. We propose to anticipatively enumerate a set of high-level failure modes $\Phi$ (e.g. ``Near Human," ``Turbulence," ``High Temperature"). Then, we construct functions that determine the safety of a description semantically by measuring if a new state description is anomalously close to a high-level failure mode $\phi \in \Phi$, relative to the safe observations in $\Omega_s$. Our approach is based on the insight that despite the potentially infinite variations in details of dangerous scenarios, most semantically unsafe scenarios are associated with a limited number of abstract, anticipated modes.

One approach to anticipate the abstract semantic failure modes that the environment may produce is to manually identify them, but this does not scale and adapt well to novel OOD settings. We propose using pre-trained foundation models offline to reason about failure modes capturing broad classes of potential semantically unsafe scenarios a robot may encounter in an environment. We create a structured prompt which has a general description of the environment, an image (if the reasoning model can process them), and a query for a list of relevant semantic failure modes. We extract the phrases of failure modes set $\Phi$ from the response as shown in the left part of Fig.~\ref{fig:anymal}. This step of forecasting failure modes functionally corresponds to the \emph{Failure Mode and Effects Analysis} stage of describing potential failures in a system for robust risk assessment~\citep{rausand2003system}.

\begin{wrapfigure}{r}{0.63\linewidth}
        \vspace{-8pt} 
        \includegraphics[width=\linewidth]{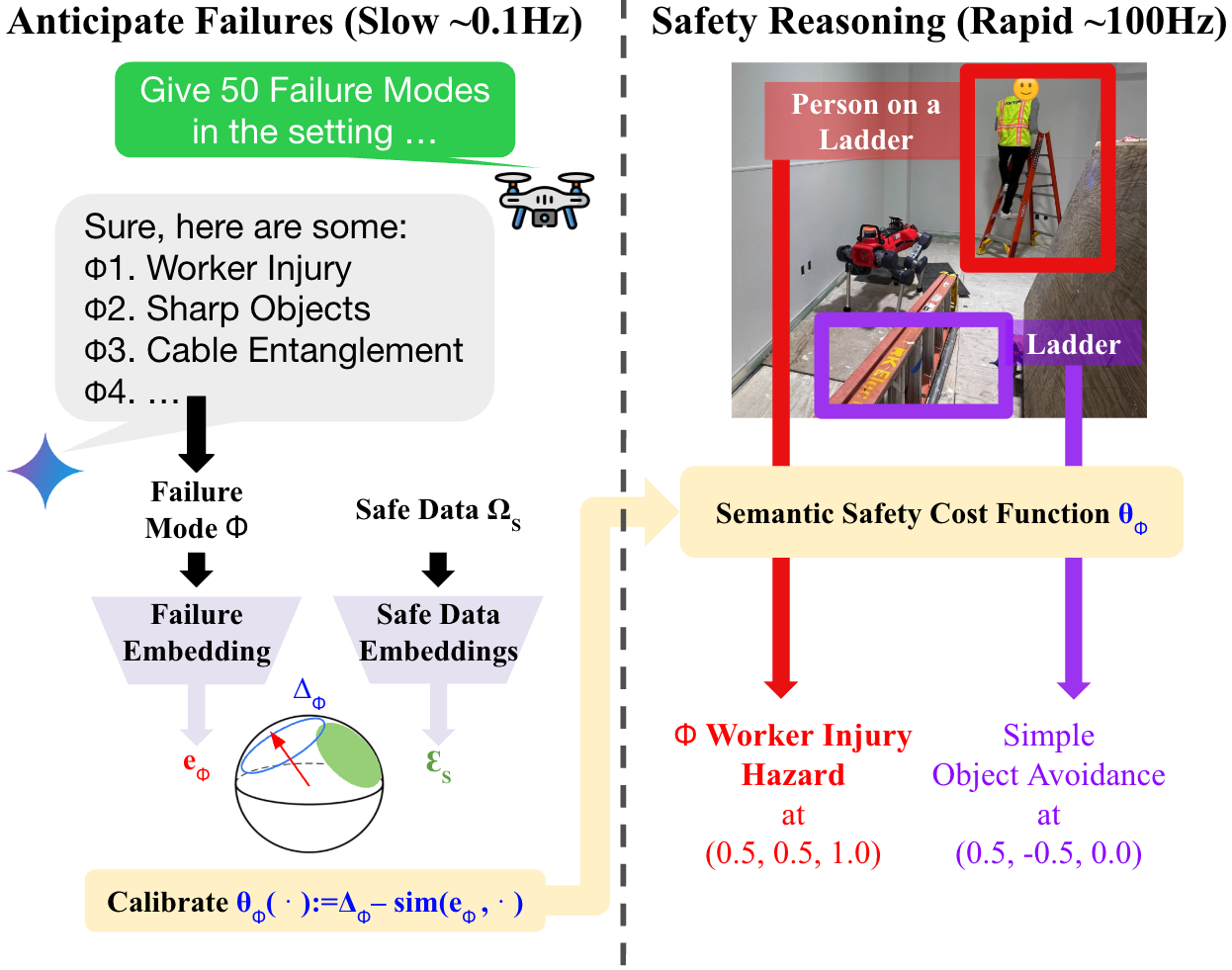}
        \caption{\footnotesize \texttt{FORTRESS} employs foundation model reasoners to anticipate failure modes. It then calibrates thresholds in the embedding model space to determine if new state descriptions more similar to failure modes than safe data $\Omega_s$. During safety-critical moments, the semantic safety cost functions rapidly identify physical unsafe state regions during an ANYmal robot's deployment. \texttt{FORTRESS} differentiates the safety of a ladder from a person standing on one, anticipating worker injuries without encountering failures in $\Omega_s$.}
        \label{fig:anymal}
\end{wrapfigure}

To quantify the relationship of state description $\omega$ to some failure mode $\phi\in\Phi$, we leverage semantic understanding capabilities of a text embedding model $\texttt{Embed}$. During the nominal trajectory or offline, we construct a set of safe embedding vectors $\mathcal{E}_s:=\{ \texttt{Embed}(\omega_s), \forall \omega_s \in \Omega_s \}$ from safe state observation set $\Omega_s$ and obtain each failure mode's embedding $\mathbf{e}_\phi := \texttt{Embed}(\phi)$. We use the cosine similarity-based function 
\begin{equation*}
\texttt{sim}(\mathbf{e}_i, \mathbf{e}_\phi) := 1-\frac{\mathbf{e}_i \cdot  \mathbf{e}_\phi}{||\mathbf{e}_i|| \ || \mathbf{e}_\phi||}
\end{equation*}
to measure semantic affinity of a state description embedding vector $\mathbf{e}_i$ to $\phi$. We identify if a state description is significantly closer to failure mode $\phi$ compared with safe data $\Omega_s$ and therefore is unsafe, by calibrating a threshold $\Delta_\phi$ using conformal prediction on the $\alpha\in(0,1)$ quantile of safe data:
\begin{equation}
    \Delta_\phi \ := \ \sup\bigl\{\delta\in\mathbb{R} \ : \ \big|  \{e_s\in \mathcal{E}_s \ : \ \texttt{sim}(e_s , \ e_\phi) \geq \delta\} \big| \ \geq \ (1-\alpha) N \bigr\},
\label{eq:calib}
\end{equation}
which is the largest real scalar that lower bounds at least $\lceil(1-\alpha) N\rceil$ safe data similarity scores \citep{angelopoulos2024Conformal}.

To quantify the hazard of regions in the state space with respect to the robot, we propose cost functions based on conformity scores~\cite{angelopoulos2024Conformal}. We use the function $\texttt{nearby}$ (Section~\ref{sec:PF}) and failure mode $\phi$ to construct semantic safety cost functions that analyze concepts within radius $l_\phi$ of state $x$: 
\begin{equation}
    \theta_\phi(x):= \Delta_\phi - \texttt{sim}(\texttt{Embed}(\texttt{nearby}(x, \ l_\phi)), \ e_\phi),
\label{eq:cost}
\end{equation}
where the robot at $x$ is hazardous if $\theta_\phi(x)>0$. Intuitively, any state description's embedding within $\Delta_\phi$ semantic similarity of failure mode embedding $e_\phi$ is classified as semantically unsafe (right part of Fig~\ref{fig:anymal}). We use the notation $h\in\Phi\cup\{c\}$ to indicate all failure modes and the collision hazard.

\vspace{-0.1in}
\subsection{Synthesizing semantically safe fallback plans \textbf{(M3)}}\label{sec:SafePlanGen}
\vspace{-0.1in}

The safety-critical, rapid planning layer in \texttt{FORTRESS} is responsible for generating a semantically safe path to a goal. We develop a control theoretic optimization framework that reasons about worst-case failure modes using the tools we developed in previous sections to synthesize the plans.

We construct a Reach-Avoid problem of entering a region $\mathcal{B}_{\rho}(g):=\{x: ||x-g||\leq\rho\}$ (a ball of radius $\rho$ centered around some goal $g$ from Section~\ref{sec:fallback}) while avoiding semantically unsafe regions in the state space. We leverage the semantic safety cost functions from~\eqref{eq:cost} for all failure modes in $\Phi$. We obtain trajectory plan $\tau=x_{\{1:T\}}$ with horizon $T$ that starts at the robot's current location $b$ and reaches goal location $g$ by solving the following, where $\mathbb{N}^{\leq T}$ indicates the set $\{1,2,...T\}$:
\begin{equation}
\begin{aligned}
&&\quad \text{s.t.} \quad & x_1 = b \\
\tau^* &= \arg\min_{\tau} \max_{h \in \Phi\cup \{c\}} \max_{x \in \tau} \ \theta_h (x)&& \exists k \in \mathbb{N}^{\leq T},\quad \{x_i|k\leq i\leq T\} \subseteq \mathcal{B}_\rho(g)\\
&&& \exists u\in \mathcal{U},\forall j\in\mathbb{N}^{\leq T-1} \quad  x_{j+1} = f(x_j, u) ,
\end{aligned}
\label{eq:optim}
\end{equation}
where $\tau^*$ is the optimal trajectory. We define $\Theta^*:= \max_{h \in \Phi\cup \{c\}} \max_{x \in \tau^*} \theta_h (x)$ as the minimaximax objective value. This optimization ensures the trajectory remains semantically and physically safe (i.e. $\theta_h(x)\leq0$) if possible by minimizing worst-case influences of failures along the entire trajectory. The first constraint forces the trajectory to start at the robot's current location, the second requires the trajectory will reach and remain indefinitely near the goal, and the third guarantees the trajectory is dynamically feasible. This optimization structure is related to HJ reachability's~\citep{bajcsy2024human} game-theoretic optimization by scoping out the ``adversary's" strategies and considering worst-case bounded ``disturbances" that the environment may produce (i.e. potential hazards $\Phi\cup\{c\}$). 

We employ a combination of motion planning and path tracking to find an approximate solution to optimization of~\eqref{eq:optim} and compute the executable controls. We use Rapidly exploring Random Trees (RRT)~\citep{lavalle1998rapidly} to plan a trajectory to the goal region using the cost functions $\theta_h(x)$ to guide and invalidate unsafe states during the search procedure. Then, we employ Model Predictive Control (MPC) or Linear Quadratic Regulator (LQR) to obtain controls for tracking the RRT plan. By inflating the distance thresholds $l_h, \ h\in\Phi\cup\{c\}$ for the cost functions outlined in Section~\ref{sec:FailReason} and ensuring the incremental distance of the planner is small enough to account for the error of the path tracking algorithm, we can leverage results similar to~\citep{herbert2017fastrack, luders2010bounds} and prove the executed trajectory will align with the safety and reachability requirements (formalization and proof in Appendix Section~\ref{sec:PlanningProof}).

Once entering the goal region $\mathcal{B}_\rho(g)$, the robot executes controls to ensure control invariance of the fallback set (e.g. initiate landing, hovering in place, etc). If the planner cannot identify a safe trajectory (i.e. $\Theta^*>0$), we iterate through strategies $\sigma_i\in \Sigma$ from Section~\ref{sec:fallback}, selecting a new goal from $\mathcal{G}_i$ and recomputing the optimization of ~\eqref{eq:optim}, until a safe trajectory is produced (i.e., $\Theta^*\leq 0$). During planning, we can log which hazards in $\Phi\cup \{c\}$ prevented the implementation of any fallback strategy, providing interpretable explanations for why certain strategies in $\Sigma$ could not be executed.

\begin{table}[t!]
\centering
{\footnotesize
\begin{tabular}{@{}l@{\hspace{2pt}}l@{\hspace{4pt}}ccc@{\hspace{8pt}}ccc@{\hspace{8pt}}ccc@{}}
\toprule
 & \multirow{+2}{*}{\makecell[l]{(Best) Embed.}} & \multicolumn{3}{c}{Boat (Synthetic)} & \multicolumn{3}{c}{Vehicle (Synthetic)} & \multicolumn{3}{c}{ANYmal \textbf{(Hardware)}}  \\
\cmidrule(lr){3-5} \cmidrule(lr){6-8} \cmidrule(lr){9-11}
Method & Model & TPR & TNR & Bal. Acc. & TPR & TNR & Bal. Acc. & TPR & TNR & Bal. Acc. \\
\midrule

 & Voyage AI  & \underline{0.97} & \underline{0.97} & \textbf{0.97} & 0.81 & 0.98 & 0.89 & 0.59 & 0.87 & 0.73 \\
\multirow{-2}{*}{\makecell[l]{10 modes}} & Qwen2  & 0.78 & 0.66 & 0.72 & 0.65 & 0.90 & 0.78 & 0.82 & \underline{0.98} & \textbf{0.90} \\
\multirow{-2}{*}{\makecell[l]{(Ours)}} & OpenAI & 0.72 & 0.90 & 0.81 & 0.83 & 0.98 & \textbf{0.90} & 0.51 & 0.82 & 0.66 \\
\midrule

 & Voyage AI & 0.34 & 0.78 & 0.56 & 0.05 & 0.96  & 0.50 & 0.02 & \underline{0.98} & 0.50\\
\multirow{-2}{*}{\makecell[l]{1 mode}} & Qwen2  & 0.92 & 0.10 & 0.51 & 0.05 & \underline{1.00} & 0.53 & 0.65 & 0.60 & 0.63\\
\multirow{-2}{*}{\makecell[l]{``Safe"}} & OpenAI  & 0.61 & 0.47 & 0.54 & 0.25 & 0.86 & 0.55 & 0.15 & 0.97 & 0.56 \\
\midrule
GPT-4o & N/A - prompting  & 0.79 & 0.73 & 0.76 & \underline{1.00} & 0.23 & 0.61 & 0.89 & 0.40 & 0.64 \\
o3-mini & N/A - prompting  & 0.79 & 0.87 & 0.83 & 0.73 & 0.93 & 0.83 & 0.95 & 0.80 & 0.87 \\
o4-mini & N/A - prompting  & 0.72 & 0.79 & 0.76 & 0.88 & 0.85 & 0.86 & \underline{1.00} & 0.70 & 0.85 \\
\bottomrule
\end{tabular}
}
\caption{\footnotesize Accuracy of our approach for calibrating embedding model-based cost functions to detect semantically unsafe descriptions on synthetic datasets and data collected from ANYmal hardware exploring a room under construction. Performance metrics are measured with True Negative Rate (TNR), True Positive Rate (TPR), and Balanced Accuracy. Results are shown for \texttt{FORTRESS} with $10$ failure modes, ablation baselines that compute similarity distance to 1 mode called ``Safe," and on-the-fly prompting of slow reasoning models.}
\label{tab:10acc}
\vspace{-0.15in}
\end{table}


\vspace{-0.1in}
\section{Experiments}\label{sec:E}
\vspace{-0.1in}

We conduct various experiments to test three hypotheses of our approach \texttt{FORTRESS}:

\textbf{H1}: By reasoning about the safety of a scene using text embedding models and a limited set of failure modes, we can identify OOD failures (i.e. semantically unsafe descriptions) more accurately than on-the-fly prompting of slow foundation models reasoning about safety of a description.

\vspace{-0.05in}

\textbf{H2}: \texttt{FORTRESS} can leverage foundation model reasoners to automatically anticipate failure modes that semi-monotonically improve classification accuracy using embedding-based cost functions. 

\vspace{-0.05in}

\textbf{H3}: By performing rapid constraints and planning computation when requested for a fallback plan, we can generate fallback plans in \emph{real time} that have improved planning success and safety compared to baseline approaches that hard-code fallbacks and/or perform na\"ive object avoidance.

\begin{figure}[t!]
    \centering
    \includegraphics[width=\linewidth]{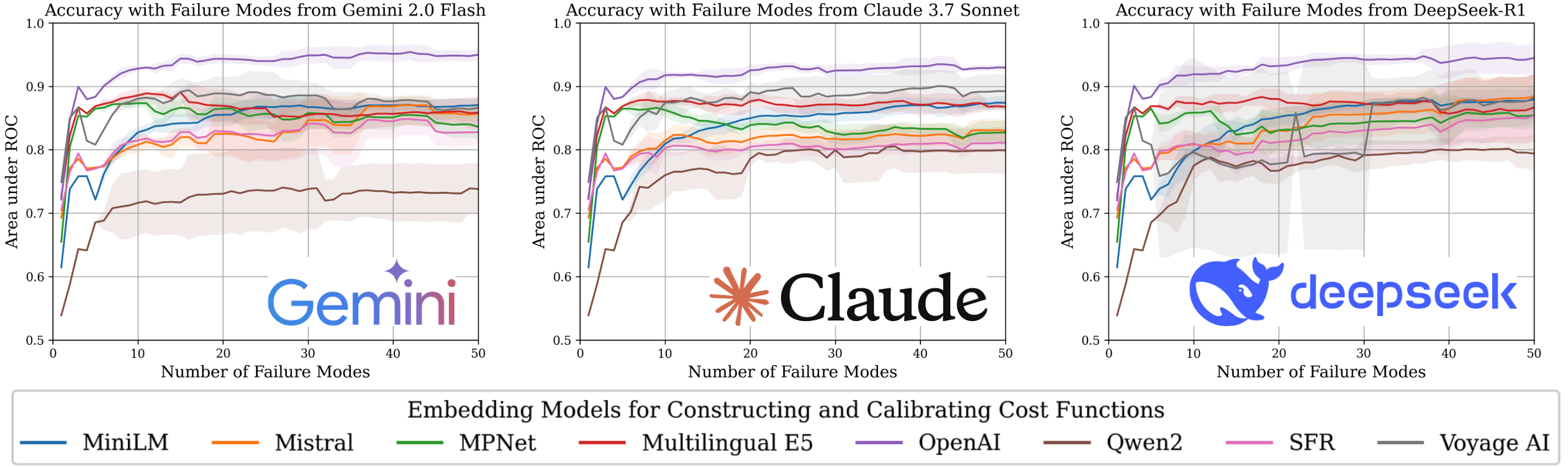}
    \caption{\footnotesize \textbf{Increasing Failure Coverage from Prompting Reasoners:} We prompt Gemini 2.0 Flash, Claude 3.7 Sonnet, and DeepSeek-R1 reasoners to anticipate 1 to 50 failure modes and calibrate cost functions based on eight embedding models to classify if semantically unsafe drone descriptions are closer to failure modes than the distance from the safe drone data to these modes. We use AUROC to measure classifier performance. Results for Llama 4 Scout, GPT-4o, Mercury, and OpenAI's other reasoning models are in Appendix Section~\ref{sec:MoreReasoningModels}.}
    \label{fig:fm_classify}
\vspace{-0.1in}
\end{figure}

\vspace{-0.1in}
\subsection{Embedding classification}
\vspace{-0.1in}

We test \textbf{H1} by measuring the accuracy of our approach in extracting semantic safety constraints. In particular, we create synthetic datasets in several domains such as aerial vehicle drone taxis, self-navigating maritime vessels, and autonomous vehicles. We compile synthetic training data for semantic state descriptions $\Omega^{\text{train}}_s$ that have safe collections of environments and concepts for each domain that robots are known to handle in their nominal operations safely. We also construct safe description test data $\Omega^{\text{test}}_s$ different from those in training data, as well as dangerous description test data $\Omega^{\text{test}}_d$ that contain unsafe collections. We evaluate and calibrate eight text embedding models for classification: MiniLM (22M) ~\citep{wang2020minilm}, Mistral (7B)~\citep{wang2023improving, jiang2023mistral}, MPNet (110M)~\citep{song2020mpnet, reimers2019sentence}, Multilingual-E5 (560M)~\citep{wang2024multilingual}, OpenAI Text Embedding Large~\citep{OpenAIEmbeddingModels2024}, Qwen2 (7B)~\citep{li2023towards}, Salesforce (SFR) Embedding Mistral (7B)~\citep{SFRAIResearch2024}, and Voyage AI's voyage-3-large~\citep{voyageai_embeddings}.

In the first set of experiments, we define around $10$ high-level descriptions of failure modes for each domain. These descriptions are kept concise to cover a wide range of potential unsafe scenarios. We calibrate thresholds for each model and domain using the safe dataset $\Omega^{\text{train}}_s$. We present accuracy results on the total testing dataset $\Omega^{\text{test}}_s \ \cup \ \Omega^{\text{test}}_d$ for the Autonomous Boat and Vehicle synthetic datasets in Table~\ref{tab:10acc}. We additionally compare with an ablation that considers only a single mode ``Safe" and prompting (slow) reasoning models. The results demonstrate that for each domain, \texttt{FORTRESS}'s approach can achieve a balanced accuracy performance higher than $0.90$, effectively distinguishing between safe and unsafe scenes. The other approaches perform poorly since they reason about semantic safety too abstractly (i.e. with 1 mode ``Safe") or too fine-grained (i.e. reasoning models can over-extrapolate from scene descriptions).

\subsection{Accuracy of proactively anticipating failure modes }

We verify \textbf{H2} by exploring automating failure mode generation by querying LLM with a prompt that specifies the robot's setting and asks for potential failure modes to anticipate. Effectively, we employ reasoning models to scope out the ``adversarial" strategies. Note that this querying process can be performed during the execution of the nominal trajectory of the robot when it enters a new deployment region or offline if the environment is already known. Fig.~\ref{fig:fm_classify} shows results of querying various reasoning models for up to $50$ failure modes and measuring the performance of the classification performance of the calibrated embedding model cost functions. By querying models for more failure modes, we observe the Area Under the Receiver Operating Characteristic (AUROC) curves, which capture semantic safety classification performance, generally improve and stabilize. OpenAI and Voyage AI embedding models achieve increases in accuracy, with AUROC values above $0.9$, outperforming those generated from manual failure modes. Overall, \texttt{FORTRESS} can effectively identify failure modes to improve coverage of semantically unsafe OOD failure descriptions and avoid suggestions of spurious modes that exacerbate false positive rates.

We also test the classification accuracy of our approach on data we collect from deploying an ANYmal robot (Fig.~\ref{fig:anymal}) in a room under construction. We feed a prompt and an image of the environment to Claude 3.7 Sonnet and extract around $10$ failure modes. We calibrate thresholds on a deployment with relatively safe and manageable concepts detected with an object detector (e.g. ladders, paint cans, and boxes), and measure semantic safety detection rate when deploying the ANYmal near both safe concepts and unsafe ones (e.g. person on a ladder, caution tape, and cables). We present results in the rightmost column group of Table~\ref{tab:10acc} and in Appendix Sections~\ref{sec:MoreROC} \&~\ref{sec:ANYmalApp}. 


\begin{table*}[!t] 
\centering
\begin{minipage}{0.4\textwidth} 
  \centering
  \includegraphics[width=\linewidth]{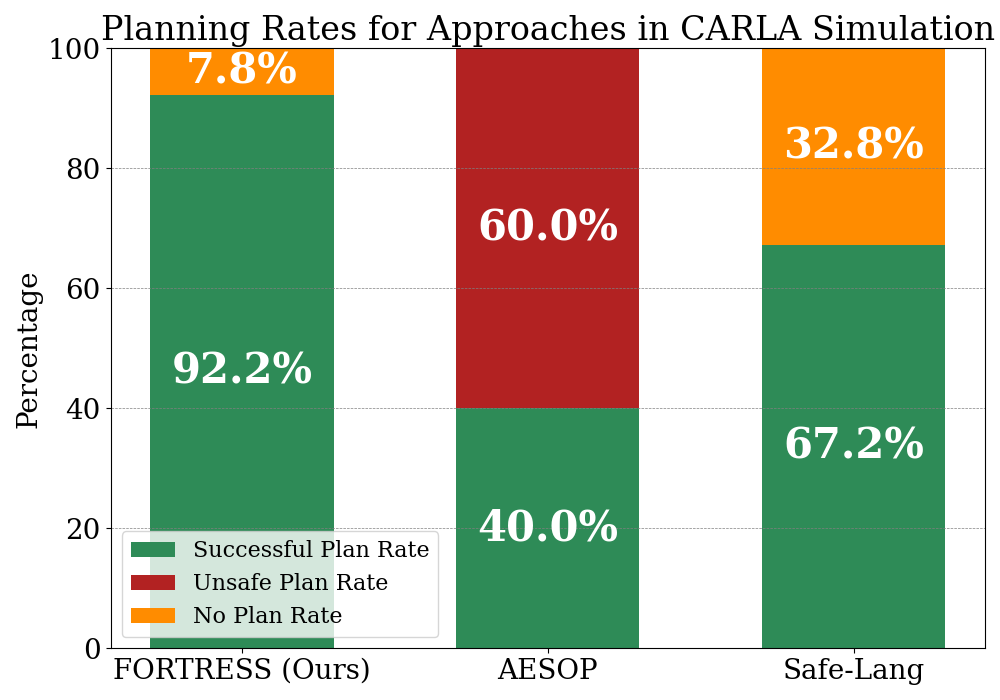} 
    {\footnotesize \captionof{figure}{ Planning rates of \texttt{FORTRESS} versus AESOP~\citep{sinha2024real} and Safe-Lang~\citep{santos2024updating} for drone robot in CARLA simulation. We augment baselines with our VLM goal identification for fair comparison.}\label{fig:planrate}}
\end{minipage}%
\hspace{0.5cm} 
\begin{minipage}{0.55\textwidth} 
  \centering
  {\scriptsize
    {Computation Times for \texttt{FORTRESS} on Drone Hardware}
    \begin{tabular}{ccc}
        \hline
        Component & Mean Time (s) & STD Dev. (s) \\
        \hline
        Query Molmo for Goal points & 5.82 & 0.13 \\
        Query 2.0 Flash for Failure Modes  & 3.68 & 0.33 \\
        Query 3.7 Sonnet for Failure Modes  & 15.65 & 0.51 \\
        Query R1 for Failure Modes & 12.76 & 1.22 \\
        Calibrate Voyage AI Embeddings & 3.91 & 0.08 \\
        Calibrate OpenAI Embeddings & 5.43 & 0.19 \\
        \hline
        Safety Reasoning Inference  & 0.011 & 0.006 \\
        Reach-Avoid Planner & 1.28 & 0.55 \\
        \hline
    \end{tabular}}
    {\footnotesize {\caption{ Component times of \texttt{FORTRESS} algorithm during quadrotor drone hardware deployment with Jetson Nano. Px4 manages the communication interface, and motion capture sensors enable drone localization. The first grouping of rows is computations done at a low frequency during nominal trajectory or offline; the second grouping is executed during the safety-critical moment when the response is needed. }\label{tab:Times}}}
\end{minipage}
\end{table*}


\subsection{Safe fallback planning in real time}

We measure the success and safety of the generated plans for a single fallback strategy of landing on buildings in the CARLA simulation and hardware experiments on a drone to validate \textbf{H3}. In CARLA (Fig.~\ref{fig:molmo}), we create an agent with the dynamics of a drone to navigate over an urban landscape with concepts like firetrucks, people, cars, traffic cones (more details of setup are in Appendix Section~\ref{sec:CARLADetails}). We compare our approach with recovery planning based on AESOP~\citep{sinha2024real}, which does not avoid semantically unsafe regions in its fallback stage, and adapting Safe-Lang~\citep{santos2024updating} to have a VLM identify unsafe regions for its na\"ive avoidance approach. In Fig.~\ref{fig:planrate}, \texttt{FORTRESS} improves the success rate and safety of generated fallback plans due to its nuanced reasoning that can accurately demarcate semantically safe and unsafe regions. We additionally show how \texttt{FORTRESS} can adaptively change strategies and replan in dynamic environments with moving concepts in Appendix Section~\ref{sec:Dynamic}.

We also deploy our framework on drone hardware to test whether \texttt{FORTRESS} can operate in real-world settings: \texttt{FORTRESS} determines fallback goals for strategy of landing on building roofs, extract semantic safety constraints like High Temperature and Construction, and rapidly generate semantically safe plans seen in Fig.~\ref{fig:herofig}. We measure each component of our approach on Jetson Nano hardware shown in Table~\ref{tab:Times}. The safe embedding reasoner and the reach-avoid planner operate rapidly and therefore are deployed instantaneously once the runtime monitor has triggered a fallback response. While our fallback strategy identification and safety calibration modules that query the multi-modal reasoners take longer, we perform these operations at a low frequency during nominal operations (see Sections~\ref{sec:fallback} \& ~\ref{sec:FailReason}), caching potential locations and semantic safety functions and loading when needed. Additional details on hardware experiments are in the Appendix Section~\ref{sec:DroneApp}.


\vspace{-0.1in}
\section{Discussion and Conclusion}\label{sec:DC}
\vspace{-0.1in}

We present \texttt{FORTRESS}, a framework that prevents OOD failures by bridging open-world reasoning with dynamics-aware planning and control to generate fallback strategies. At a low frequency in nominal operations or offline, \texttt{FORTRESS} uses foundation models to identify fallback goals, anticipate failure modes, and calibrate semantic safety reasoners, caching the goals and semantic cost functions to mitigate latency in safety-critical moments. At runtime, it adaptively synthesizes fallback plans via reach-avoid analysis guided by these goals and semantic constraints. We validate our semantic safety reasoners' detection of potential OOD failures on synthetic maritime and vehicle datasets as well as real-world data from an ANYmal robot. We deploy \texttt{FORTRESS} on drone hardware and in simulation, with improved planning success and safety.


\clearpage

\section{Limitations}

Our proposed framework currently has limitations that open several promising avenues for future work. Although we currently write semantic fallback strategies for robots and environments, extensions could develop methods that automatically extract semantic descriptions, infer the relevant fallback goals, and adapt them across diverse robotic platforms and operation settings. Non-static fallback strategy goals can also be investigated depending on the robot and environment, such as merging into a different lane with moving traffic in the autonomous vehicle setting. Furthermore, our mapping of semantically unsafe regions to static, fixed‑radius avoidance regions could be extended to handle dynamic or context‑dependent constraints such as adaptive avoid boundaries, dynamic obstacles, terrain‑specific factors including varying surface friction, or environmental disturbances like wind to enable more nuanced reasoning about when and how failures arise. These semantic fallback goals and failures can be retrieved from rule books with high-level specifications on handling the presence of semantically unsafe situations (e.g. a handbook on aviation, naval, and traffic regulations). Finally, while in this paper we focus on preventing OOD failures, future work can expand on our framework for diagnosis and fallback strategies that manage the safety and recovery of robots actively experiencing failures (e.g. recovery with minimal damage during a collision or from a fire).

\acknowledgments{We thank Stanford Robotics Center for their assistance with the ANYmal robot experiments. This work is supported by the NASA University Leadership Initiative (grant \#80NSSC20M0163), Torc Robotics, Swiss Re, and Toyota Research Institute. DM is supported by a NASA Space Technology Graduate Research Opportunity. This article solely reflects the opinions and
conclusions of its authors and not any aforementioned supporting entity. All authors are members of the Autonomous Systems Laboratory (ASL) at Stanford University; LDL is an external affiliate of ASL.}



\newpage

\tableofcontents

\begin{appendix}

\section{Notation}

\begin{tabular}{ll}
\textbf{Notation} & \textbf{Description} \\
\midrule
$x$ & Robot State \\
$\mathcal{X}$ & State Set \\
$u$ & Control action \\
$\mathcal{U}$ & Control Set \\
$f$ & Dynamics \\
$\Omega_s$ & Set of safe, nominal semantic state descriptions \\
$N$ & number of descriptions in of $\Omega_s$ \\
$\Omega_d$ & Set of semantically unsafe semantic state descriptions \\
$\omega$ & a semantically unsafe semantic state description \\
$\Sigma$ & Set of semantic description of fallback strategies \\
$\sigma_i$ & $i^{th}$ semantic description of in $\Sigma$ \\
$\mathcal{G}_i$ & set of 3D goal coordinates from querying for strategy $\sigma_i$ from VLM \\
$g$ & a 3D goal coordinate \\
$\Phi$ & Set of semantic failure modes \\
$\phi$ & a failure mode \\
$l_\phi$ & physical distances to keep from failure mode $\phi$ and physical objects\\
$l_c$ & physical distances to keep from failure mode $\phi$ and physical objects\\
$\texttt{nearby}(x,l)$ & function that returns state description of concepts in radius $l$ around $x$\\
$\theta_c(x)$ & collision hazard cost function returning positive iff $x$ is within $l_c$ of physical obstacle\\
$\texttt{Embed}(\omega)$ & function Text embedding model producing a vector from description $\omega$\\
$\mathcal{E}_s$ & Set of embeddings vectors of descriptions in $\Omega_s$ \\
$e_\phi$ & embedding vector of failure $\phi$ \\
$\texttt{sim}(e_i,e_\phi)$ & similarity cost function based on cosine similarity of vectors $e_i$ and $e_\phi$ \\
$\Delta_\phi$ & failure embedding similarity threshold calibrated for $\phi$ on safe data $\Omega_s$ \\
$\alpha$ & quantile for threshold (e.g. values $0.0, 0.02, 0.05, 0.1, 0.25, 0.3, 0.4,$ etc) \\
$\theta_\phi(x)$ & semantic safety cost function detecting if state $x$ is in region that could experience $\phi$ \\
$\Phi\cup\{c\}$ & all failure modes and the collision hazard (used when representing cost functions and distances) \\
$\rho$ & radius around goal coordinate determining reach region \\
$\mathcal{B}_\rho(g)$ & ball of radius $\rho$ around goal $g$ \\
$b$ & beginning point for plan/location where fallback response was triggered \\
$\tau$ & trajectory plan \\
$T$ & horizon/steps in trajectory  \\
$x_{\{1:T\}}$ & sequence of states $\{x_1,x_2,...x_T\}$\\
$\mathbb{N}^{\leq T}$ & set of natural numbers from 1 to $T$ \\
$\Omega^{\text{train}}_s$ & training/calibrating set of safe, nominal semantic state descriptions\\
$\Omega^{\text{test}}_s$ & testing set of safe, nominal semantic state descriptions\\
$\Omega^{\text{test}}_d$ & testing/validating set of unsafe/failure semantic state descriptions\\
$\Lambda$ & inverse covariance matrix of safe embedding vector set $\mathcal{E}_s$
\end{tabular}

\newpage

\section{Evaluation of Semantic Safety Cost Functions}

\subsection{Additional Experiments}\label{sec:MoreROC}

\begin{figure*}[h]
    \centering
    \includegraphics[width=\textwidth]{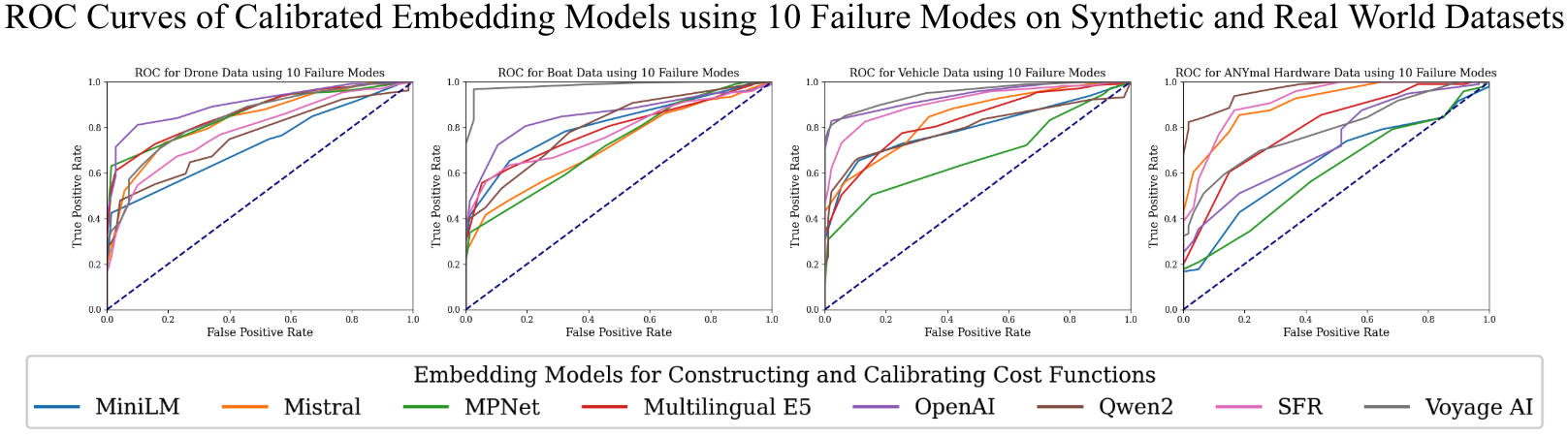}
    \caption{ROC curves using around $10$ failure modes with varying percentile $\alpha$ thresholds on autonomous drones, boats, and vehicle environments using cosine similarity on $8$ embedding models.}
    \label{fig:rocsnorm}
\end{figure*}

\begin{figure*}[h]
    \centering
    \includegraphics[width=\textwidth]{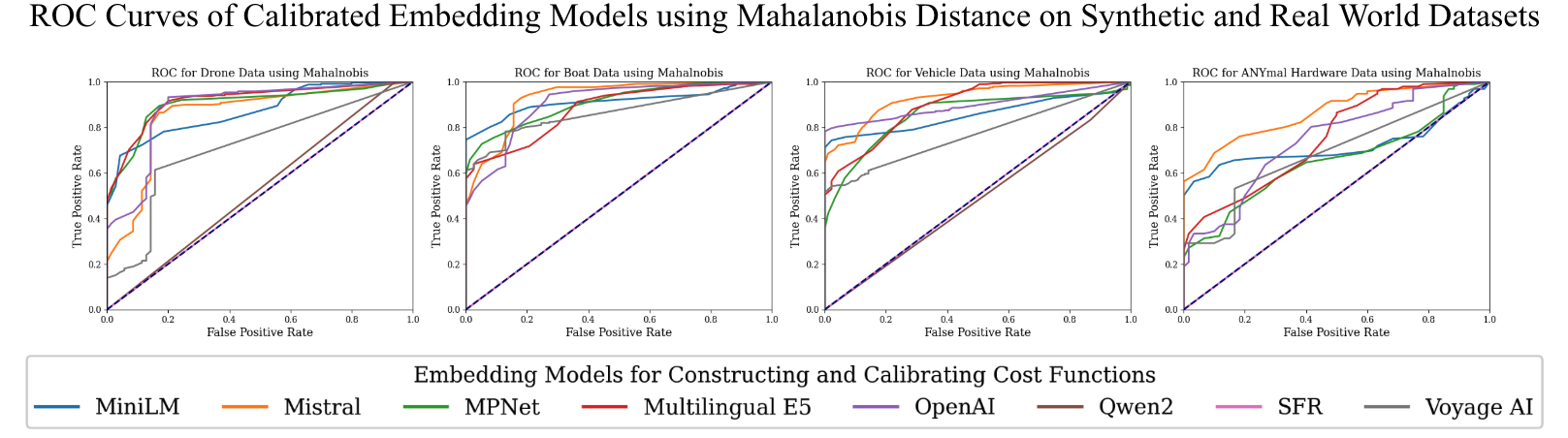}
    \caption{ROC curves using around $10$ failure modes with varying percentile $\alpha$ thresholds on autonomous drones, boats, and vehicle environments using Mahalanobis distance calibrated on cosine similarity on $8$ embedding models.}
    \label{fig:rocsmahalanobis}
\end{figure*}

\begin{figure*}[h]
    \centering
    \includegraphics[width=\textwidth]{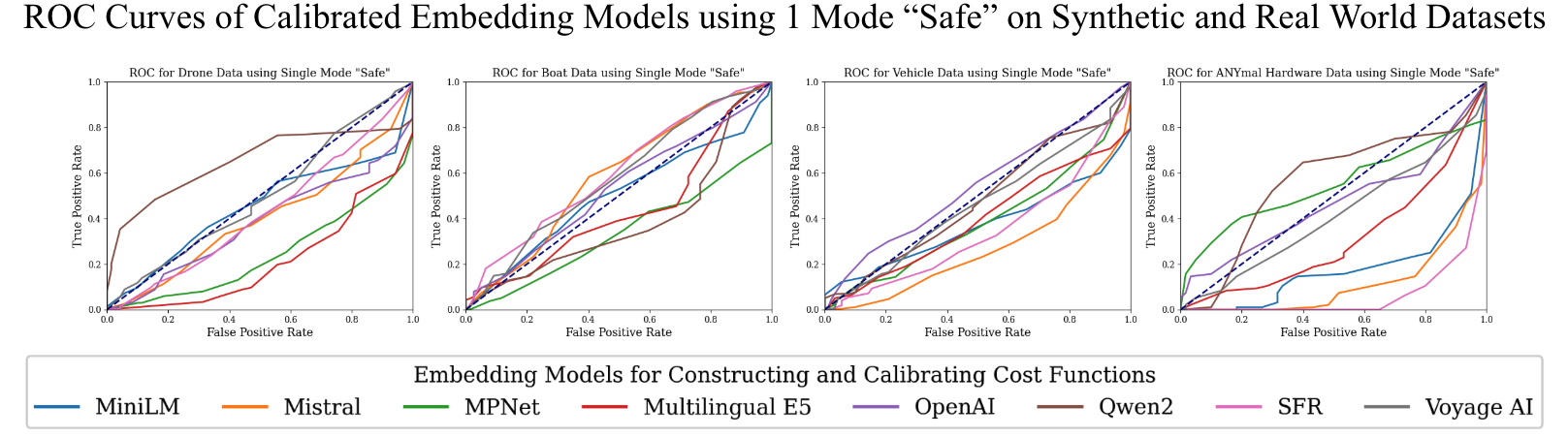}
    \caption{ROC curves using only the ``Safe" Mode with varying percentile $\alpha$ thresholds on autonomous drones, boats, and vehicle environments using cosine similarity on $8$ embedding models.}
    \label{fig:rocssafe}
\end{figure*}

We perform a more comprehensive analysis on our semantically safety cost functions to extract OOD failures. Specifically, we present ROC curves for multiple datasets and calibrate various embedding models. Different classification accuracies are given by varying the quantile threshold alpha in calibration defined in Section~\ref{sec:FailReason}. Our benchmarks include synthetic datasets for autonomous drones, boats, and vehicles. We also present results on real-world data we collected from ANYmal robot deployments.

\emph{Our approach with cosine similarity}: For the main \texttt{FORTRESS} classifier, we construct failure modes per domain and use cosine similarity in the embedding space to measure proximity to each mode (see Fig.~\ref{fig:rocsnorm}). For the ANYmal dataset, we query Claude Sonnet 3.7 and extract 10 failure modes.

\emph{Our approach with Mahalanobis}: We also explore alternative similarity measures and implement a variant inspired by the Mahalanobis distance. We compute the inverse covariance matrix $\Lambda$ of the safe dataset embeddings $\mathcal{E}_s$. Given a new scene embedding $e_t$, the similarity distance function to embedding failure $e_\phi$ is given by: 
\begin{equation}
\texttt{sim}_{\texttt{mahalanobis}}(e_t, e_\phi) := \sqrt{(e_t-e_\phi)^{\top}\Lambda (e_t - e_\phi)}.
\end{equation}
Using this similarity metric, we calibrate the corresponding threshold $\Delta^{\texttt{mahalanobis}}_\phi$ for our cost function. The results of the classification performance for the models on the various datasets are shown in Fig.~\ref{fig:rocsmahalanobis}. The curves for Mahalanobis distance do not show a significant improvement in accuracy compared with just cosine similarity.

\emph{Ablation with single mode}: As an ablation baseline, we employ a single mode called ``Safe." We compute the cosine similarity between each scene embedding and the ``Safe" embedding, and classify scenes with closer similarity (i.e. lower $\texttt{sim}$ values) as safe (opposite of our failure mode decision rule in~\eqref{eq:calib}). Results in Fig.~\ref{fig:rocssafe}.

\subsection{Prompts for Embedding Models}


For autonomous drones:
\begin{tcolorbox}[colback=lightgray!30!white,colframe=lightgray!30!white,sharp corners]
\begin{lstlisting}[linewidth=\columnwidth,breaklines=true]
I am the runtime monitor for a vision-based autonomous vertical takeoff and landing (VTOL) drone operating as an urban air taxi. My job is to analyze the observations of the VTOL and identify whether certain points along the VTOL's trajectory are prone to failure rendering the VTOL unable to handle them as part of its nominal operations. The VTOL's observations take the form of object detections generated by an onboard perception pipeline.

Scene Description: The surroundings of the VTOL include: 
${CONCEPT_LIST}
\end{lstlisting}
\end{tcolorbox}

For autonomous boats:
\begin{tcolorbox}[colback=lightgray!30!white,colframe=lightgray!30!white,sharp corners]
\begin{lstlisting}[linewidth=\columnwidth,breaklines=true]
I am the runtime monitor for a vision-based autonomous boat transporting cargo. My job is to analyze the observations of the boat and identify whether certain points along the boat's trajectory are prone to failure rendering the boat unable to handle them as part of its nominal operations. The autonomous boat's observations take the form of object detections generated by an onboard perception pipeline.

Scene Description: The surroundings of the boat include: 
${CONCEPTS_LIST}
\end{lstlisting}
\end{tcolorbox}

For autonomous vehicles:
\begin{tcolorbox}[colback=lightgray!30!white,colframe=lightgray!30!white,sharp corners]
\begin{lstlisting}[linewidth=\columnwidth,breaklines=true]
I am the runtime monitor for a vision-based autonomous vehicle operating as an urban ground taxi cab. My job is to analyze the observations of the vehicle and identify whether certain points along the vehicle's trajectory are prone to failure rendering the autonomous vehicle unable to handle them as part of its nominal operations. The vehicle's observations take the form of object detections generated by an onboard perception pipeline.

Scene Description: The surroundings of the autonomous vehicle include:
${CONCEPTS_LIST}
\end{lstlisting}
\end{tcolorbox}

For ANYmal robot:
\begin{tcolorbox}[colback=lightgray!30!white,colframe=lightgray!30!white,sharp corners]
\begin{lstlisting}[linewidth=\columnwidth,breaklines=true]
I am the runtime monitor for a vision-based autonomous quadruped ANYmal robot operating in a construction area. My job is to analyze the observations of the quadruped and identify whether certain points along the quadruped's trajectory are prone to failure for the robot or danger caused to the environment. The quadruped ANYmals's observations 
take the form of object detections generated by an onboard perception pipeline.

Scene Description: The surroundings of the quadruped include: 
${CONCEPTS_LIST}
\end{lstlisting}
\end{tcolorbox}

\subsection{Baseline implementation of on-the-fly prompting of slow reasoning models}

As shown in Table~\ref{tab:10acc} of the main paper, we also compare our approach with on-the-fly prompting of the reasoning models. In particular, we directly query the foundation model reasoners with a structured prompt to analyze whether or not a scene is safe. This would theoretically be queried during the safety critical moment to understand the semantically unsafe regions, so we term this as on-the-fly prompting. Table~\ref{tab:10acc} in the main paper and Table~\ref{tab:OTFTimes} below show that simply querying the reasoning models results in poor accuracy and is much slower ($>50\times$) than \texttt{FORTRESS} in analyzing semantic safety. An example prompt is:
\begin{tcolorbox}[colback=lightgray!30!white,colframe=lightgray!30!white,sharp corners]
\begin{lstlisting}[linewidth=\columnwidth,breaklines=true]
I am the runtime monitor for a vision-based autonomous vertical takeoff and landing (VTOL) drone operating as an urban air taxi. My job is to analyze the observations of the VTOL and identify whether certain points along the VTOL's trajectory are prone to failure rendering the VTOL unable to handle them as part of its nominal operations. The VTOL's observations take the form of object detections generated by an onboard perception pipeline.

Scene Description: The surroundings of the VTOL include: 
${CONCEPT_LIST}
Is this situation safe for the agent? Respond in the following format:
Answer: {Yes/No}
\end{lstlisting}
\end{tcolorbox}

\begin{table*}[h] 
    \centering
    \begin{tabular}{ccc}
        \hline
        Reasoning Model & Mean Time (s) & STD Dev. (s) \\
        \hline
        GPT-4o   & 0.49 & 0.03 \\
        o3-mini  & 3.23 & 0.86 \\
        o4-mini  & 2.68 & 0.72 \\
        \hline
    \end{tabular}
    {\footnotesize {\caption{ Inference times of on-the-fly prompting of reasoning models. }\label{tab:OTFTimes}}}
\end{table*}

\newpage

\section{Reasoning Models for Failure Mode Generation}\label{sec:MoreReasoningModels}

\begin{figure*}[h]
    \centering
    \includegraphics[width=\textwidth]{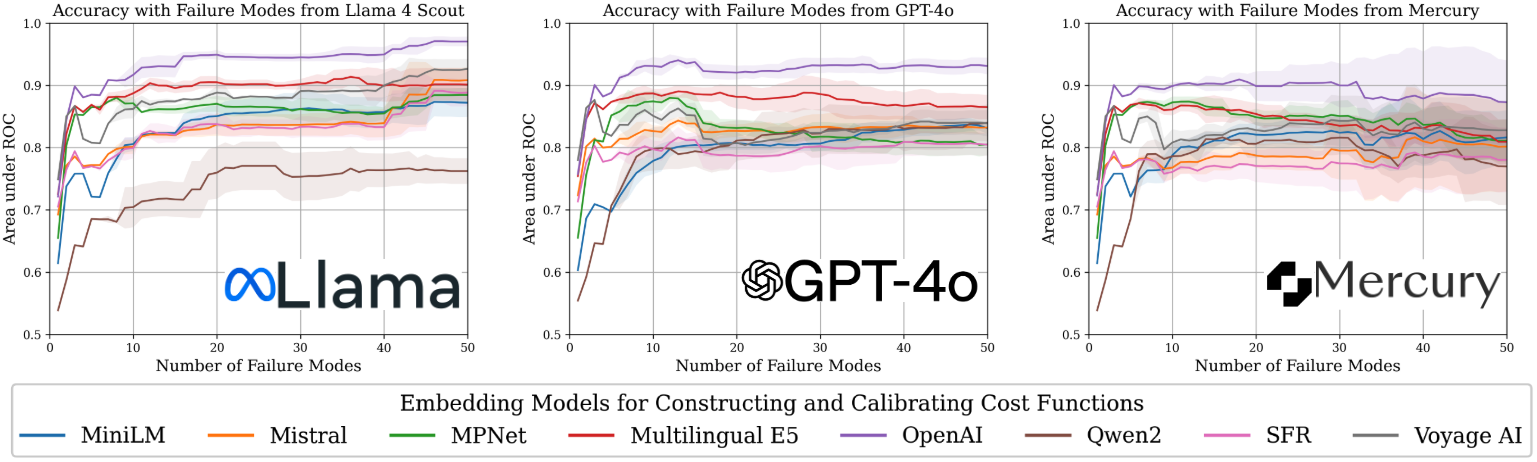}
    \caption{Increasing number of failure modes (1 to 50) taken from prompting Llama 4 Scout, OpenAI GPT-4o, and Mercury reasoners with Area under ROC curve as the measure of classifier model performance queried with eight different embedding models for cost functions over five seeds on the autonomous drone synthetic dataset. }
    \label{fig:otherreasoners}
\end{figure*}

\begin{figure*}[h]
    \centering
    \includegraphics[width=\textwidth]{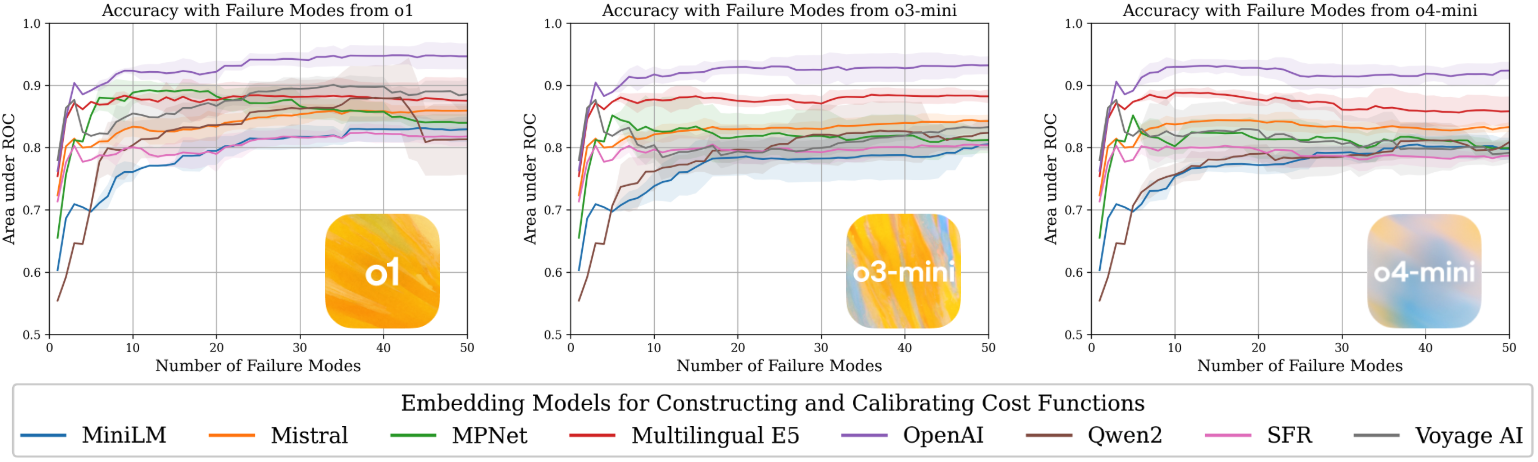}
    \caption{Increasing number of failure modes (1 to 50) taken from prompting o1, o3-mini, and o4-mini which are part of OpenAI's reasoning model series with Area under ROC curve as the measure of classifier model performance queried with eight different embedding models for cost functions over five seeds on the autonomous drone synthetic dataset. }
    \label{fig:openaireasoners}
\end{figure*}

We present results for querying foundation model reasoners for failure modes given scene descriptions and calibrating all the embedding model cost functions with cosine similarity to the failure modes. We present the accuracy of the embedding model cost functions incrementally increasing from 1 to 50 failure modes from each reasoner. We use Area under the ROC curve as the classification performance metric. We present the results on Llama 4 Scout, OpenAI GPT-4o, and Mercury in Fig.~\ref{fig:otherreasoners} and OpenAI reasoning model series in Fig.~\ref{fig:openaireasoners}. For reference, each plot constitutes $8$ embedding models $\times$ $50$ evaluations on varying numbers of failure modes $\times$ $5$ queries for failure mode lists from the reasoning models $= 2000$ calibration experiments for each reasoning model.

Overall, \texttt{FORTRESS} demonstrates strong performance in leveraging foundation models to anticipate OOD failure modes and accurately classify the failure, with the OpenAI text embedding large model consistently achieving over $0.95$ AUROC and Multilinguale-E5 and Voyage AI embedding models sometimes performing over $0.9$. It is worth noting that the AUROC largely holds steady as we include more failure modes. A significant drop would be anticipated if spurious suggestions were causing safe descriptions to be misclassified as unsafe. However, the embedding model cost functions become gradually more accurate with more failure modes generated by the reasoners, suggesting that \texttt{FORTRESS} can employ the reasoners to effectively understand and generate more failure modes that accurately improve failure coverage.

A query and response from the Llama 4 Scout model look like:
\newpage

\begin{tcolorbox}[colback=lightgray!30!white,colframe=lightgray!30!white,sharp corners]
\begin{lstlisting}[linewidth=\columnwidth,breaklines=true]

Query: Given an urban view from a drone agent, provide concise descriptions no more than four words of at least 50 general potential external failures that the drone may encounter in this environment. Avoid internal problems/failures of the drone or its equipment. Examples include Low Visibility, Near Human, Near Vehicle in motion, High Temperature, Unauthorized Access.
Provide in the format:
n. failure

Response: Here are 50 potential external failures that the drone may encounter in an urban environment:

1. Low Visibility
2. Near Human
3. Near Vehicle
4. High Temperature
5. Unauthorized Access
6. Power Line Nearby
7. Tall Building Nearby
8. Strong Wind Gust
9. Rainy Weather
10. Snowfall
11. Foggy Conditions
12. Bird Strike
13. Insect Swarm
14. Air Traffic Nearby
15. Emergency Vehicle
16. Pedestrian Crossing
17. Road Construction
18. Bridge Nearby
19. Tunnel Entrance
20. Overhanging Tree
21. Loose Debris
22. Nearby Crane
23. High Voltage Lines
24. Radio Frequency Interference
25. Water Body Nearby
26. Steep Slope
27. Unstable Ground
28. Nearby Fire
29. Toxic Gas Leak
30. Nearby Accident
31. Protests or Riots
32. Large Animal Nearby
33. Narrow Alleyway
34. Billboard or Sign
35. Overhead Wires
36. Construction Site
37. Parked Vehicle
38. Moving Train
39. Boat or Ship
40. Nearby Helicopter
41. Telephone Lines
42. Severe Weather
43. Sandstorm
44. Nearby Explosion
45. Unmarked Obstacle
46. Cycling Pedestrian
47. Dense Vegetation
48. Steep Cliff
49. Water Spray
50. Dense Traffic

Let me know if you'd like me to expand on this list!
\end{lstlisting}
\end{tcolorbox}

\newpage

\section{CARLA Simulation Experiments}\label{sec:CARLADetails}

We also deploy and evaluate \texttt{FORTRESS} in the CARLA simulator. We model the ego spectator view as a drone agent observing the city from above the buildings. Some of the buildings have people, firetrucks, traffic cones, and cars on their roofs. Given that a runtime monitor has triggered the need for a fallback response in this situation, we are tasked with generating and executing a semantically safe fallback plan.

\subsection{Implementation Details of \texttt{FORTRESS} in CARLA:  }

We first query the VLM Molmo~\cite{deitke2024molmo} for 2D coordinates. For the fallback strategy of landing on a building roof, we might query something like: 

\begin{tcolorbox}[colback=lightgray!30!white,colframe=lightgray!30!white,sharp corners]
{    \centering
    \includegraphics[width=0.5\textwidth]{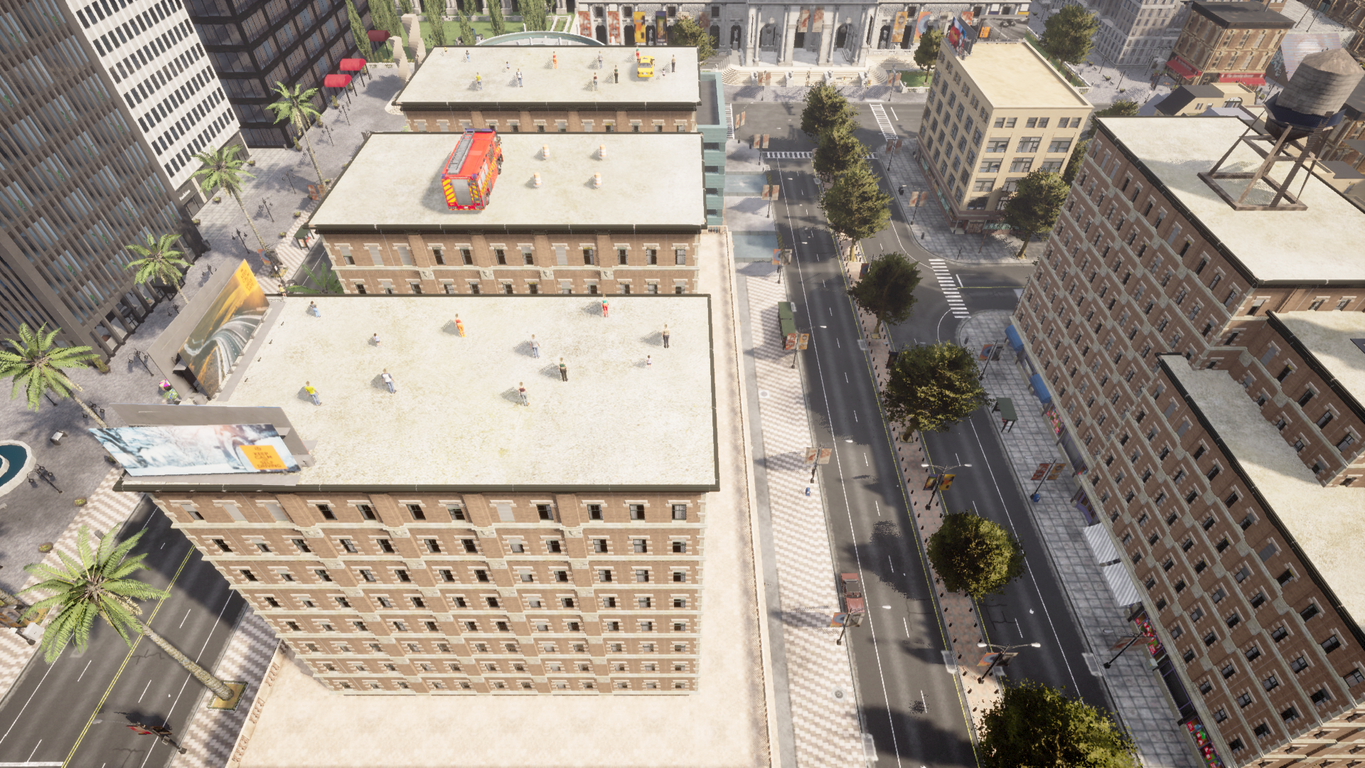}
}
\begin{lstlisting}[linewidth=\columnwidth,breaklines=true]
Query: point to empty, horizontal building roofs.

Response: <points x1="32.4" y1="51.2" x2="38.5" y2="9.5" x3="42.1" y3="24.0" x4="74.6" y4="8.1" x5="89.5" y5="20.6" x6="92.4" y6="53.5" x7="97.5" y7="69.3" alt="empty, horizontal building roofs">empty, horizontal building roofs</points>
\end{lstlisting}
\end{tcolorbox}

The response has 2D xy coordinates (that have been normalized to be in the range 0-100) from which we extract the proposed empty building roof points. They correspond to the following locations:
\begin{tcolorbox}[colback=lightgray!30!white,colframe=lightgray!30!white,sharp corners]

\includegraphics[width=0.5\textwidth]{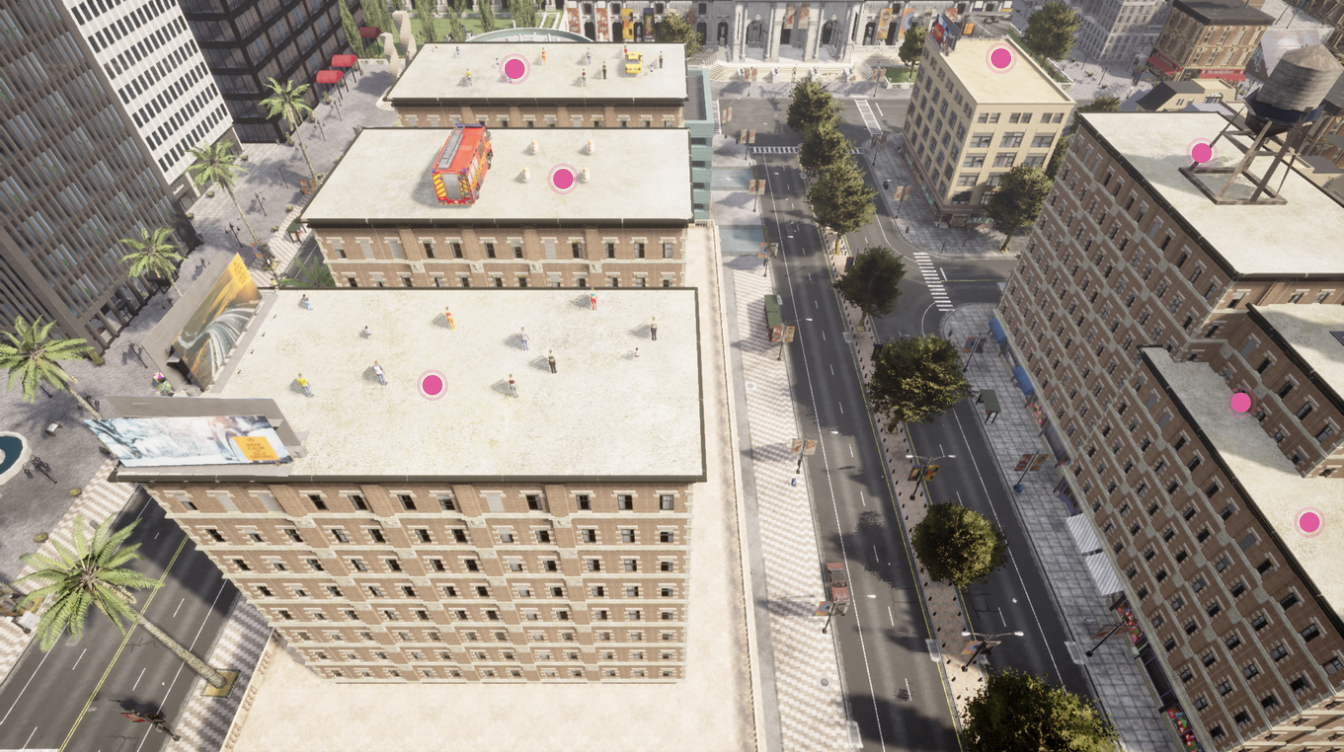}
\end{tcolorbox}
From this we can use CARLA's built-in pose and position estimates, depth view, and camera intrinsics to construct 3D global coordinates from these points, which form our goal points.

For the semantic safety cost function, we employ the OpenAI text embedding model~\cite{OpenAIEmbeddingModels2024} and calibrated with cosine similarity (the accuracy of which is displayed on the leftmost image in Fig.~\ref{fig:rocsnorm}). 

During runtime, we identify the semantically unsafe regions in the state space and perform reach-avoid planning to enter into a 1-meter radius around one of the goals identified by Molmo while avoiding the unsafe regions. Specifically, we use open-vocabulary object detectors YOLOv8~\cite{sohan2024review} and OWLViT~\cite{minderer2022simple} to identify the concepts on the building roofs and depth maps for their locations. We make $l_c$ as 2 meters and $l_\phi$ for all $\phi$ as 4 meters. We employ an RRT planner with an incremental step size of 0.5 meters and LQR to track the planned path. Because the object detectors are not always accurate from far distances, we perform a cycle of path tracking at most 15 steps of the RRT plan while querying the object detectors and replanning a trajectory to the goal from the anticipated location. In the replanning, stage if we find the original goal is infeasible (because there is no semantically safe plan to reach it), we try to plan paths to new nearby goals identified by the VLM.

\subsection{Implementation Details of Baselines in CARLA} 
While there are no works we are aware of directly generate semantically safe fallback plans in real time to prevent OOD failures, we compare our approach with two baselines by adapting from adjacently related works AESOP~\cite{sinha2024real} and Safe-Lang~\cite{santos2024updating}. 

While AESOP focuses on deciding whether a situation is OOD and what semantic fallback response is needed, its fallback planner produces a trajectory to a manually predefined fallback goal without considering the safety of the plan. We use this baseline with a few modifications such as augmenting it with our approach of querying Molmo for fallback goal identification and employing the same planning. We also include na\"ive collision hazard avoidance of keeping 2 meters from physical objects to highlight the comparative improvements provided by our semantic safety cost function. 

Safe-Lang, on the other hand, uses human language input to identify the semantically unsafe regions for the safe fallback policy/plans and does a simple object avoidance for this. Since it is impractical for humans to perform descriptive safety interventions especially in large scale deployments, we model this using the open-vocabulary object detectors to identify objects automatically and perform a blanket avoidance distance of 4 meters. We augment Safe-Lang with our VLM goal identification since their approach also does not identify new fallback strategies and goals when the nominal goal is infeasible. 

To ensure a fair comparison of \texttt{FORTRESS} and the baselines, we used the same planning and path tracking methods (i.e. RRT+LQR) for generating controls for the fallback plans.

\subsection{Discussion on Results}
The results of all the approaches are shown in Fig. 6 of the main paper. Since AESOP does not consider semantic safety, it generates plans that enter into semantically unsafe regions such as near a firetruck with traffic cones or a rooftop party with people. Furthermore, while Safe-Lang's fallback approach avoids objects identified by the object detector, it is unable to distinguish between semantically safe and unsafe concepts and therefore incurs a high no-plan rate (i.e. no safe plan could be found to reach any fallback strategy goal). Our approach can reliably generate safe plans successfully with more than a $90\%$ success rate for the strategy of ``landing on building roofs." The situations when \texttt{FORTRESS} cannot find a safe plan to implement are because all building roofs are occupied by unsafe concepts.

\section{Replanning with Dynamic Concepts}
\label{sec:Dynamic}

\begin{figure*}[h]
    \centering
    \includegraphics[width=\textwidth]{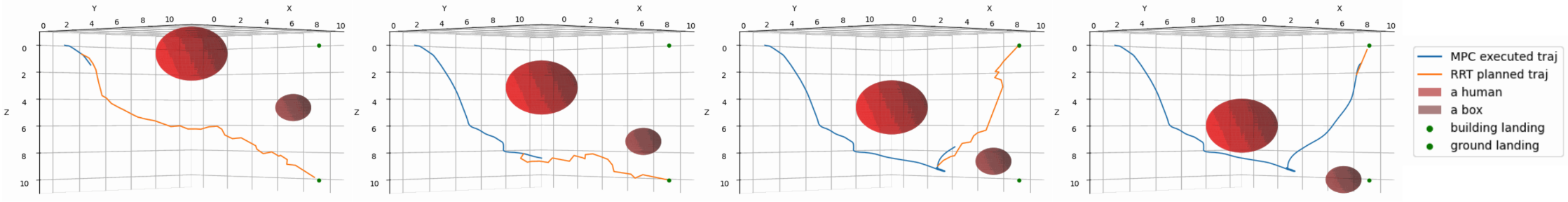}
    \caption{Example demonstrating (re)planning of safe fallback plans with moving objects. \texttt{FORTRESS} originally produced a plan to the first goal point that implemented the strategy of landing on the ground by avoiding the human (skydiver) and box. However as the human and box descended, the original fallback strategy was physically and semantically infeasible so it replanned a path to implement the next strategy of landing on a building.}
    \label{fig:movingobj}
\end{figure*}

We demonstrate an example of how \texttt{FORTRESS} changes plans and implements a new strategy when circumstances evolve to ensure both semantic and physical safety of the fallback. Specifically, we employ a robot with the dynamics of a double integrator along 3 dimensions. The environment consists of a human skydiving and a falling box, and we have two strategies: land on the ground or land on a building. In this simple example, \texttt{FORTRESS} understands the semantically unsafe regions in the state space (i.e. skydiving human is unsafe to be near while the box is relatively safe) as it is changing and in response adjusts its fallback plans or even implements a wholly new strategy. In this setting, we use RRT to replan every 10 steps. Furthermore, we employ MPC for path tracking with a planning horizon of 2 seconds, corresponding to a 20 timestep look-ahead. We use velocity and position to estimate and inflate the avoid/unsafe regions of moving objects.

\section{Details on Hardware}

\subsection{ANYmal Robot Hardware Experiments}\label{sec:ANYmalApp}

We discuss some details on the setup for the experiments conducted on the ANYmal hardware (Generation D) for testing the accuracy performance of \texttt{FORTRESS}'s semantic safety cost function in identifying potential OOD failures. The setting is a room that is actively under construction. We first collect some safe, nominal data with represent concepts that are within the operational capabilities of the ANYmal. Specifically, we navigate the robot around the room with objects like boxes, paper scraps, ladders, paint cans, and people, which we detect using the OWLv2~\cite{minderer2023scaling} open-vocabulary object detector. We extract around failure modes from querying Claude 3.7 with an image of the environment, such as Sharp Objects, Unauthorized Access, Chemical Spill, Unstable Region, Worker Injury, Entanglement, and Slippery region. Then, after calibrating our semantic safety cost functions using the Qwen2 text embedding model, we deploy the robot again but this time we place unsafe objects in the room. As seen in Fig.~\ref{fig:anymalfailuredetect}, this includes a person standing on a ladder (which is unsafe since the ANYmal may cause the person to fall down), cables (which can cause the robot to get entangled and trip), caution tape, and a blue toxic spill. \texttt{FORTRESS} can detect these semantically unsafe concepts even though no similar failures existed in the training dataset. Notice how ``person" and ``ladder" are individually detected as semantically safe but a ``person on a ladder" triggers ``Worker Injury" failure -- entering into this unsafe region can result in destabilizing the person on the ladder and potentially injure them. 


\begin{figure*}[h]
    \centering
    \includegraphics[width=\textwidth]{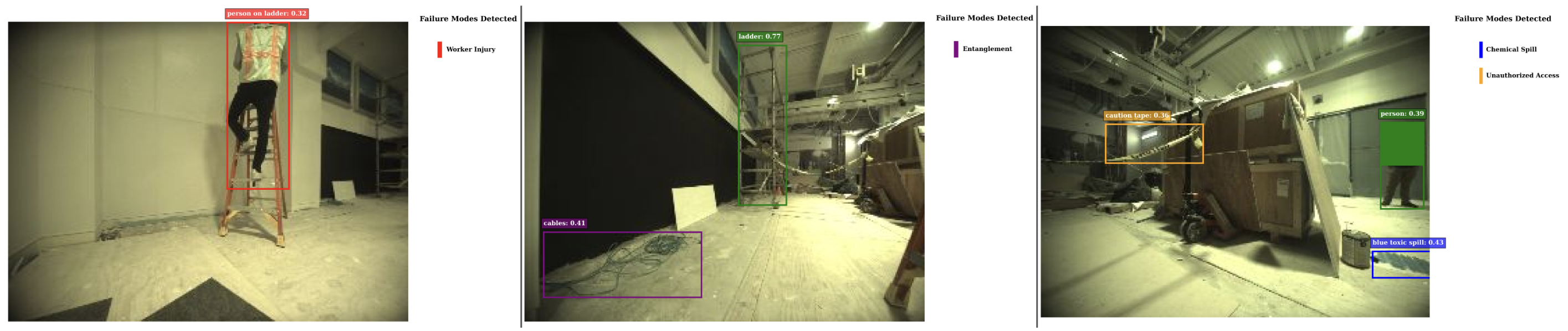}
    \caption{Examples of OOD failures detected by \texttt{FORTRESS} for deployment of ANYmal hardware in a room under construction. The green boxes indicate semantically safe concepts for the robot such as a ladder or a person. The other colors show potential hazards: in the image, the boxes are labeled with what objects are detected and on the legend we list their corresponding failure modes that have been identified by the semantic safety cost functions.}
    \label{fig:anymalfailuredetect}
\end{figure*}

\subsection{Quadrotor Drone Hardware Experiments}\label{sec:DroneApp}

We explain our experiments for implementing \texttt{FORTRESS} on quadrotor drone hardware. We equip our quadrotor with a Jetson nano for computation and an Intel Realsense D435 camera to stream RGBD data. We use the Optitrack motion capture system for localization. We deploy the drone in an environment with buildings and task \texttt{FORTRESS} with producing a response of implementing the fallback strategy of landing on building rooftops. As seen in Fig. 1 of the main paper, the rooftops consist of various safe and unsafe concepts. Specifically, for unsafe regions, we have a building with a ladder and caution tape (depicting a construction), one on fire, and a rooftop parking lot with two vehicles. Additionally, there are two buildings that are relatively safe by themselves such as one with nothing on it and one with a garden rooftop with grass and trees, and the drone lands on the garden rooftop. \texttt{FORTRESS} successfully identifies that the garden rooftop provides a safe spot to implement the strategy of landing on a building while the other buildings are either centers of OOD failures or are very close to them.


To obtain goal points, we query Molmo for rooftop landing locations as follows:

\begin{tcolorbox}[colback=lightgray!30!white,colframe=lightgray!30!white,sharp corners]
{    \centering
    \includegraphics[width=0.5\textwidth]{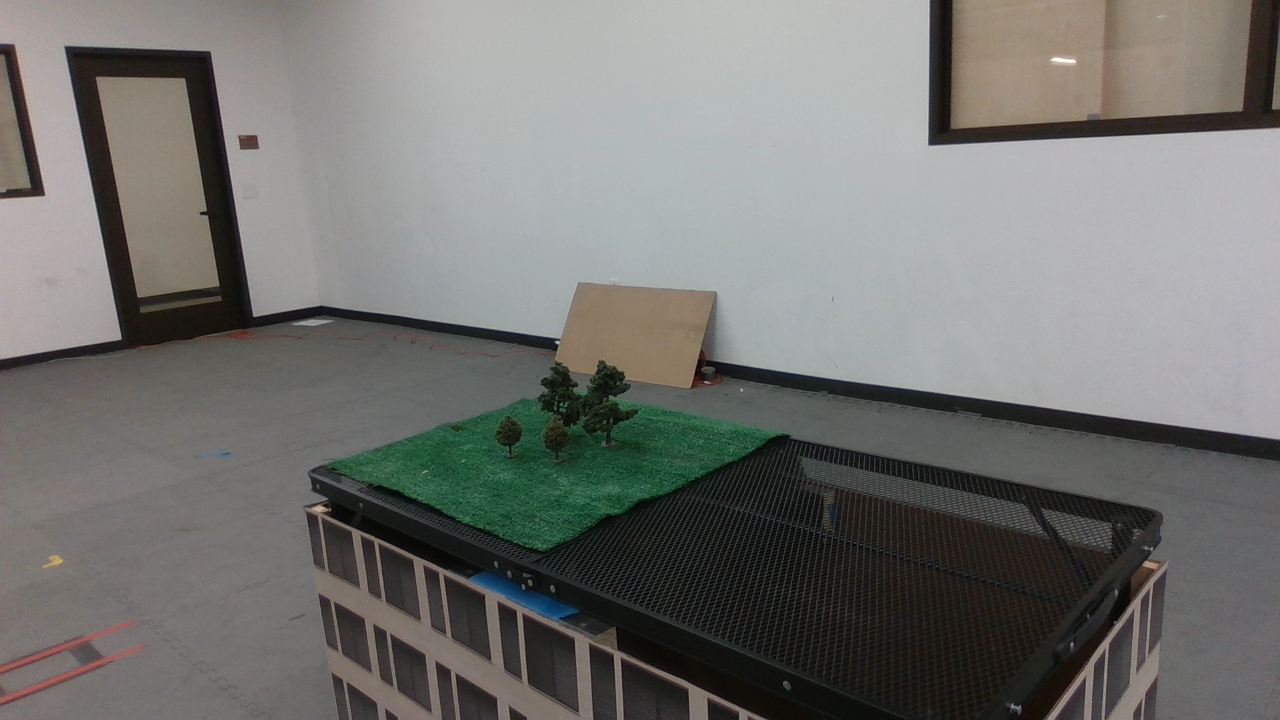}
}
\begin{lstlisting}[linewidth=\columnwidth,breaklines=true]
Query: point to middle of building rooftop.

Response: <point x="67.1" y="71.1" alt="middle of building rooftop">middle of building rooftop</point>
\end{lstlisting}
\end{tcolorbox}
This point corresponds to the pink point in the following image:
\begin{tcolorbox}[colback=lightgray!30!white,colframe=lightgray!30!white,sharp corners]
{    \centering
    \includegraphics[width=0.5\textwidth]{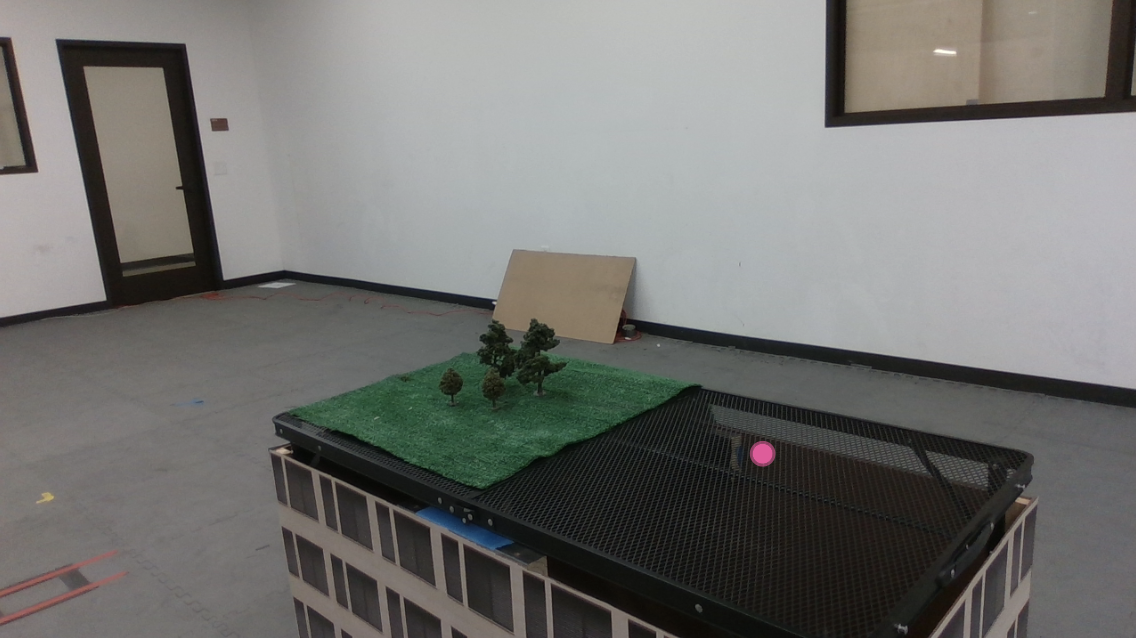}
}
\end{tcolorbox}

Similar to the CARLA setting, we identify the semantically unsafe regions in the state space by employing the OpenAI text embedding model calibrated with cosine similarity. We perform reach avoid planning to enter into a 0.5-meter radius around the goals identified by Molmo while avoiding the unsafe regions. We use the OWLv2~\cite{minderer2023scaling} open-vocabulary object detector to identify the concepts on the building roofs and their locations. We make $l_c$ as 0.1 meters and $l_\phi$ for all $\phi$ as 1.5 meters. We also validate the safety of our fallback goals by filtering out goal points $g$ when $\exists\phi\in\Phi$ where $\hat \theta_\phi(g)>0$. $\hat \theta_\phi$ is the same as $\theta_\phi$ except its distance parameter $l_\phi$ is inflated by 0.5 meters. We employ an RRT planner with an incremental step size of 0.1 meters. We track the plan with interpolation and publish the interpolated waypoints on ROS2 to the PX4 controller at 100Hz.

\section{Theorem on Safely and Successfully Solving Optimization~\eqref{eq:optim}}
\label{sec:PlanningProof}

\begin{utheorem}  Let $\eta>0$ be the worst case error bound of the path tracking control algorithm, and let $\hat\theta_h,  h\in\Phi\cup \{c\}$ be the same formulation as $\theta_h$ except the corresponding distance thresholds are inflated to $\hat l_h=l_h+\eta'$ where $\eta'>\eta$, and let $\hat{\mathcal{X}}\subseteq\mathcal{X}$ be the set of states where $\max_{h \in \Phi \cup \{c\}} \hat\theta_h(x) \leq 0$. Furthermore, suppose a motion planner is employed in state space $\hat{\mathcal{X}}$ to start from point $b$ and reach control-invariant region $\mathcal{B}_\rho(g)$, where $\rho>\eta$, using incremental step planner size of $\Delta x < \min(\rho-\eta, \min_{h\in\Phi\cup\{c\}}2\sqrt{(\eta'-\eta)^2 + 2(l_h+\eta)(\eta'-\eta)})$ produces a trajectory \( \hat\tau = [\hat x_1, \hat x_2, \dots, \hat x_k] \). Then when the path tracking control algorithm is used to follow $\hat\tau$, it produces a trajectory \( \tau^* = [x^*_1, x^*_2, \dots, x^*_k, \dots] \) that safely and successfully executes a fallback strategy (i.e. solves optimization of \eqref{eq:optim} with $\Theta^*\leq 0$).
\end{utheorem}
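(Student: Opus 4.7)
The plan is to decouple the proof into two parts matching the two requirements of optimization~\eqref{eq:optim}: semantic/collision safety along $\tau^*$ (i.e.\ $\Theta^*\leq 0$), and reaching the control-invariant region $\mathcal{B}_\rho(g)$. For each, I would propagate the planning-level guarantees on $\hat\tau$ (which lives entirely in $\hat{\mathcal{X}}$) through the worst-case tracking tube of radius $\eta$ that surrounds any executed trajectory tracking a planned path, similarly to the FaSTrack and closed-loop RRT bounds in~\citep{herbert2017fastrack, luders2010bounds}. Concretely, I would first set up the following: for every waypoint $\hat x_i$ of the planner, $\hat\theta_h(\hat x_i)\leq 0$ for all $h\in\Phi\cup\{c\}$, which means every $h$-hazardous point $p$ (i.e.\ every point whose neighborhood makes $\theta_h$ positive) is at distance at least $l_h+\eta'$ from $\hat x_i$; and the tracker yields $\|x^*(t)-\hat x(t)\|\leq \eta$ along the piecewise-linear interpolation of $\hat\tau$.

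The key geometric step, and the main obstacle, is showing that the tracking tube never intersects any $h$-unsafe ball. For any segment $[\hat x_i,\hat x_{i+1}]$ of length at most $\Delta x$, both endpoints are at distance at least $l_h+\eta'$ from any hazardous point $p$, so by elementary planar geometry (foot of perpendicular argument) every point of the segment is at distance at least $\sqrt{(l_h+\eta')^2-(\Delta x/2)^2}$ from $p$. The executed trajectory $\tau^*$ lies inside an $\eta$-tube around this segment, so every executed state is at distance at least
\begin{equation*}
\sqrt{(l_h+\eta')^2-(\Delta x/2)^2}\;-\;\eta
\end{equation*}
from $p$. Rearranging the step-size bound, $(\Delta x/2)^2<(\eta'-\eta)^2+2(l_h+\eta)(\eta'-\eta)=(l_h+\eta')^2-(l_h+\eta)^2$, so the distance is strictly greater than $l_h$, which gives $\theta_h(x^*)\leq 0$ for every $x^*\in\tau^*$ and every $h\in\Phi\cup\{c\}$. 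Taking max over $h$ and over $\tau^*$ yields $\Theta^*\leq 0$.

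For the reach condition I would use $\rho>\eta$ together with $\Delta x<\rho-\eta$. The planner terminates at some $\hat x_k\in\mathcal{B}_\rho(g)$; by the incremental-step property, the first waypoint entering $\mathcal{B}_\rho(g)$ can be taken in $\mathcal{B}_{\rho-\eta}(g)$ (the segment preceding the entry cannot cross an annulus of width $\rho-\eta$ in a single step). Then the tracking bound gives $\|x^*_k-g\|\leq\|x^*_k-\hat x_k\|+\|\hat x_k-g\|\leq\eta+(\rho-\eta)=\rho$, so $x^*_k\in\mathcal{B}_\rho(g)$; the control-invariance of $\mathcal{B}_\rho(g)$ then supplies the controls that keep $\{x^*_i\}_{i\geq k}$ inside the ball, satisfying the second constraint of~\eqref{eq:optim}. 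The dynamic feasibility constraint is inherited directly from the tracking controller. The subtlety I expect to have to argue most carefully is that the $\eta$-tube claim applies uniformly along segment interiors (not just at waypoints), which is exactly what FaSTrack-style reachability and closed-loop RRT analyses deliver once the tracker's error invariant is verified on the incremental step size $\Delta x$; this is the step that links the purely geometric planning bound to the dynamical safety guarantee and is what makes the algebraic form of the $\Delta x$ bound tight.
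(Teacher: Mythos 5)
Your proposal is correct and follows essentially the same route as the paper: constraints one and three are dispatched as trivial, the safety claim is reduced to the same right-triangle/perpendicular-foot bound showing a segment with endpoints at distance $l_h+\eta'$ from a hazard stays at least $\sqrt{(l_h+\eta')^2-(\Delta x/2)^2}\geq l_h+\eta$ away (your identity $(\eta'-\eta)^2+2(l_h+\eta)(\eta'-\eta)=(l_h+\eta')^2-(l_h+\eta)^2$ is exactly the paper's geometry, stated a bit more cleanly), and reachability follows from $\Delta x<\rho-\eta$ plus the $\eta$-tracking tube and control invariance of $\mathcal{B}_\rho(g)$. The only loose phrase is the claim that the \emph{first} waypoint entering $\mathcal{B}_\rho(g)$ lies in $\mathcal{B}_{\rho-\eta}(g)$ (it need not, though the planner can always step further inward until one does), but this matches the paper's own implicit assumption that the terminal waypoint lands within $\Delta x$ of $g$.
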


\begin{proof}
The first and third constraints of \eqref{eq:optim} are satisfied trivially with the planner and path tracking control algorithm. The core proof of the theorem therefore is about guaranteeing that planning and path tracking will satisfy the optimization and second constraint of \eqref{eq:optim}. 

For the optimization, the worst case to consider is when two consecutive waypoints in the safe/valid space $\hat{\mathcal{X}}$ are on the border of the sphere created by $\texttt{nearby}$ for cost/failure $h\in \Phi\cup \{c\}$ using the inflated radii -- this is the closest that the waypoints can be to the epicenter of the unsafe region. In this case, the two waypoints are a distance $l_h+\eta'$ from the point/concept/obstacle. The length of the line segment between the two waypoints is $\Delta x$ which is the planning algorithm's step size parameter. The robot, and therefore the line segment, should be at most $l_h+\eta$ close to the failure point/concept/obstacle since path tracking of the line segment has worst case error of $\eta$. The line segment in the worst case is tangent to the sphere of radius $l_h+\eta$ and its midpoint is on a sphere with the same center but with radius $l_h+\eta$. This creates a right angle triangle from which we obtain the upper bound of $(\Delta x)/2$ which is the distance from a waypoint to the midpoint of the line segment: $(\Delta x/2)^2 + (l_h+\eta')^2 < (l_h+\eta)^2$. Intuitively larger $\Delta x$ would mean less granular planning and therefore more error. This inequality can be rearranged to get $\Delta x < 2\sqrt{(\eta'-\eta)^2 + 2(l_h + \eta)(\eta'-\eta)}$. See the below diagram for a visualization of the geometry. This constraint ensures that if a planner plans a path avoiding any state with positive values from the inflated length cost functions $\hat{\theta}$, then the path tracking algorithm will generate a trajectory that avoids states with positive values from the original length cost functions $\theta$.

\begin{center}

\begin{tikzpicture}
    \def\lpluse{2.2} 
    \def\lplusepr{3} 
    
    \pgfmathsetmacro{\Ax}{sqrt(\lplusepr*\lplusepr - \lpluse*\lpluse)}
    
    \coordinate (O) at (0,0); 
    \coordinate (B) at (0, -\lpluse); 
    \coordinate (A) at (\Ax, -\lpluse); 
    \coordinate (C) at (-\Ax, -\lpluse); 
    
    \draw (O) circle (\lpluse);
    \draw (O) circle (\lplusepr);
    
    \draw[thick] (O) -- (B); 
    \draw[thick] (O) -- (A); 
    \draw[thick] (B) -- (A); 
    \draw[thick] (C) -- (A); 
    
    \draw pic [draw=black, angle radius=6mm] {right angle = O--B--A};
    
    \node[left] at (0,-\lpluse/2) {$l+\eta$}; 
    \node[below] at (\Ax/2,-\lpluse) {$\Delta x/2$}; 
    \node[below] at (-\Ax/2,-\lpluse) {$\Delta x/2$}; 
    \node[above] at (1.2*\lplusepr/2,-1.3*\lpluse/2) {$l+\eta'$}; 
    
    
\end{tikzpicture}
\end{center}

Furthermore, we need another constraint to ensure that the path tracking trajectory of the plan reaches and remains in the goal. Since we assume region $\mathcal{B}_\rho (g)$ is control invariant, meaning that once we enter it, there are controls that ensure the robot remains within the region, we only need to guarantee the plan definitively enters the region. Once again, since the path tracking error is $\eta$, we simply ensure that the planner's incremental distance does not miss the sphere region $\rho$ around goal $g$. The planner will reach a point within $\Delta x$ of the goal $g$, and the path tracker will be at most $\eta$ from that point. We can ensure the path tracker enters the goal region using the constraint that $\eta+\Delta x <\rho$. Bringing together all the constraints and considering the worst case failure, we get the upper bound of $\Delta x < \min(\rho-\eta, \min_{h\in\Phi\cup\{c\}}2\sqrt{(\eta'-\eta)^2 + 2(l_h+\eta)(\eta'-\eta)})$. Ultimately, this bound of the planning step size ensures that if the planning algorithm is employed in modified state space $\hat{\mathcal{X}}$ and reaches the goal, then the path tracking algorithm will be able to produce a safe trajectory (i.e. avoids the semantically unsafe regions with OOD failures) and implements the fallback strategy by reaching the fallback goal region.
\end{proof}

\end{appendix}

\end{document}